\newcolumntype{C}[1]{>{\centering\arraybackslash}m{#1}}
\providecommand{\customgenericname}{}
\newcommand{\newcustomtheorem}[2]{%
  \newenvironment{#1}[1]
  {%
   \renewcommand\customgenericname{#2}%
   \renewcommand\theinnercustomgeneric{##1}%
   \innercustomgeneric
  }
  {\endinnercustomgeneric}
}
\icmltitlerunning{Locally Persistent Exploration in Continuous Control Tasks with Sparse Rewards}
\newcommand{\norm}[1]{\| #1 \|}
\newcommand{\ex}[1]{\mathbb{E}\left[ #1 \right]}
\newcommand{\states}{\mathcal{S}}
\newcommand{\actions}{\mathcal{A}}
\newtheorem{theorem}{Theorem}
\newtheorem{assumption}[theorem]{Assumption}
\newtheorem{lemma}[theorem]{Lemma}
\newtheorem{corollary}[theorem]{Corollary}
\newtheorem{remark}{Remark}
\newtheorem{definition}{Definition}
\begin{document}

\twocolumn[
\icmltitle{Locally Persistent Exploration in Continuous Control Tasks with Sparse Rewards}



\icmlsetsymbol{equal}{*}

\begin{icmlauthorlist}
\icmlauthor{Susan Amin}{equal,McGill,Mila}
\icmlauthor{Maziar Gomrokchi}{equal,McGill,Mila}
\icmlauthor{Hossein Aboutalebi}{equal,Waterloo,art}
\icmlauthor{Harsh Sajita}{McGill,Mila}
\icmlauthor{Doina Precup}{McGill,Mila}
\end{icmlauthorlist}

\icmlaffiliation{McGill}{Department of Computer Science, McGill University, Montréal, Québec, Canada}
\icmlaffiliation{Mila}{Mila - Québec Artificial Intelligence Institute, Montréal, Québec, Canada}
\icmlaffiliation{Waterloo}{Department of Computer Science, University of Waterloo, Waterloo, Ontario, Canada}
\icmlaffiliation{art}{Waterloo Artificial Intelligence Institute, University of Waterloo, Waterloo, Ontario, Canada}

\icmlcorrespondingauthor{Susan Amin}{susan.amin@mail.mcgill.ca}

\icmlkeywords{Machine Learning, ICML}

\vskip 0.3in
]



\printAffiliationsAndNotice{\icmlEqualContribution} 

\begin{abstract}
A major challenge in reinforcement learning is the design of exploration strategies, especially for environments with sparse reward structures and continuous state and action spaces. Intuitively, if the reinforcement signal is very scarce, the agent should rely on some form of short-term memory in order to cover its environment efficiently. We propose a new exploration method, based on two intuitions: (1) the choice of the next exploratory action should depend not only on the (Markovian) state of the environment, but also on the agent's trajectory so far, and (2) the agent should utilize a measure of spread in the state space to avoid getting stuck in a small region. Our method leverages concepts often used in statistical physics to provide explanations for the behavior of simplified (polymer) chains in order to generate persistent (locally self-avoiding) trajectories in state space. We discuss the theoretical properties of locally self-avoiding walks and their ability to provide a kind of short-term memory through a decaying temporal correlation within the trajectory. We provide empirical evaluations of our approach in a simulated 2D navigation task, as well as higher-dimensional MuJoCo continuous control locomotion tasks with sparse rewards.
\end{abstract}

\section{Introduction}
As reinforcement learning agents typically learn tasks through interacting with the environment and receiving reinforcement signals, a fundamental problem arises when these signals are rarely available. The sparsely distributed rewards call for a clever exploration strategy that exposes the agent to the unseen regions of the space via keeping track of the visited state-action pairs \cite{fu2017ex2,nair2018overcoming}. However, that cannot be the case for high-dimensional continuous space-and-action spaces, as defining a notion of density for such tasks is intractable and heavily task-dependent \cite{NIPS2017_7090, taiga2019benchmarking}.

Here, we introduce an exploration algorithm that works independently of the extrinsic rewards received from the environment and is inherently compatible with continuous state-and-action tasks. Our proposed approach takes into account the agent's short-term memory regarding the action trajectory, as well as the trajectory of the observed states in order to sample the next exploratory action. The main intuition is that in a pure exploration mode with minimal extrinsic reinforcement, the agent should plan trajectories that expand in the available space and avoid getting stuck in small regions. In other words, the agent may need to be ``persistent" in its choice of actions; for example, in a locomotion task, an agent may want to pick a certain direction and maintain it for some number of steps in order to ensure that it can move away from its current location, where it might be stuck at. The second intuition is that satisfying the first condition requires a notion of spread measure in the state space to warrant the agent's exposure to unvisited regions. Moreover, in sparse reward settings, while the agent's primary intention must be to avoid being trapped in local regions by maintaining a form of short-term memory, it must still employ a form of memory evaporation mechanism to maintain the possibility of revisiting the informative states. Note that in continuous state-and-action settings, modern exploration methods \cite{ostrovski2017count, houthooft2016vime, ciosek2019better} fail to address the fore-mentioned details simultaneously.

Our polymer-based exploration technique (PolyRL) is inspired by the theory of freely-rotating chains (FRCs) in polymer physics to implement the aforementioned intuitions. FRCs describe the chains (collections of transitions or moves) whose successive segments are correlated in their orientation. This feature introduces a finite (short-term) stiffness (persistence) in the chain, which induces what we call {\em locally} self-avoiding random walks (LSA-RWs). The strategy that emerges from PolyRL induces consistent movement, without the need for exact action repeats (\emph{e.g.} methods suggested by \cite{dabney2020temporally, lakshminarayanan2017dynamic, sharma2017learning}), and can maintain the rigidity of the chain as required. Moreover, unlike action-repeat strategies, PolyRL is inherently applicable in continuous action-state spaces without the need to use any discrete representation of action or state space. The local self-avoidance property in a PolyRL trajectory cultivates an orientationally persistent move in the space while maintaining the possibility of revisiting different regions in space. 

To construct LSA-RWs, PolyRL selects persistent actions in the action space and utilizes a measure of spread in the state space, called the \emph{radius of gyration}, to maintain the (orientational) persistence in the chain of visited states. The PolyRL agent breaks the chain and performs greedy action selection once the correlation between the visited states breaks. The next few exploratory actions that follow afterward, in fact, act as a perturbation to the last greedy action, which consequently preserves the orientation of the greedy action. This feature becomes explicitly influential after the agent is exposed to some reinforcement, and the policy is updated, as the greedy action guides the agent's movement through the succeeding exploratory chain of actions.

\vspace{-0.1in}
\section{Notation and Problem Formulation}\label{sub:Notation}
We consider the usual MDP setting, in which an agent interacts with a Markov Decision Process $\langle \mathcal{S}, \mathcal{A}, P, r\rangle$, where
$\mathcal{S}\subseteq \mathbb{R}^{d_\mathcal{S}}$ and $\mathcal{A}\subseteq\mathbb{R}^{d_\actions}$ are continuous state and action spaces, respectively; $P: \mathcal{S}\times\mathcal{A} \rightarrow (\mathcal{S}\rightarrow[0,1])$ represents the transition probability kernel, and $r:\mathcal{S} \times \mathcal{A} \rightarrow \mathbb{R}$ is the reward function. Moreover, we make a smoothness assumption on $P$,
\begin{assumption}\label{as:1}
The transition probability kernel $P$ is Lipschitz w.r.t. its action variable, in the sense that there exists $C > 0$ such that for all $(s,a,a') \in \states \times \actions \times \actions$ and measurable set $\mathcal{B} \subset \states$,
\begin{align}
    |P(\mathcal{B}|s,a) - P(\mathcal{B}|s,a')| \leq C \norm{a - a'}.
\end{align}
\end{assumption}

Assumption \ref{as:1} has been used in the literature for learning in domains with continuous state-action spaces \cite{antos2008fitted}, as the assumption on the smoothness of MDP becomes crucial in such environments \cite{antos2008fitted,bartlett2007sample,ng2000pegasus}. Note that we only use the Lipschitz smoothness assumption on $P$ for the theoretical convenience, and we later provide experimental results in environments that do not satisfy this assumption. Furthermore, we assume that the state and action spaces are inner product spaces. 

We show the trajectory of selected exploratory actions in the action space $\actions$ as $\tau_\actions = (a_0,...,a_{T_e-1})$ and the trajectory of the visited states in the state space $\states$ as $\tau_\states = (s_0,...,s_{T_e-1})$, where $T_e$ denotes the number of exploratory time steps in a piece-wise persistent trajectory, and is reset after each exploitation step. Moreover, we define
\begin{align}
\Omega(\tau_\states, \tau_\actions):= \{s \in \states | \Pr[S_{T_e}=s | s_{T_e-1}, a_{T_e-1}] > 0\}\label{eq:omega}
\end{align}
as the set of probable states $s\in\states$ observed at time $T_e$ given $s_{T_e-1}$ from $\tau_\states$ and the selected action $a_{T_e-1}$ from $\tau_\actions$. For simplicity, in the rest of this manuscript, we denote $\Omega(\tau_\states, \tau_\actions)$ by $\Omega$. In addition, the concatenation of the trajectory $\tau_\states$ and the state $s$ visited at time step $T_e$ is denoted as the trajectory $\tau'_\states := (\tau_\states,s)$. For the theoretical analysis purposes, and in order to show the expansion of the visited-states trajectory in the state space, we choose to transform $\tau_\states$ into a sequence of vectors connecting every two consecutive states (bond vectors),
\begin{align}\label{eq:bond-vector}
\omega_{\tau_\states} =& \{\omega_i\}_{i=1}^{T_e-1}, ~~~where~~~ \omega_i= s_i-s_{i-1}.
\end{align}
Finally, we define Self-Avoiding Random Walks (LSA-RWs), inspired by the theory of freely-rotating chains (FRCs), where the correlation between $|i-j|$ consecutive bond vectors is a decaying exponential with the correlation number $Lp$ (persistence number). $Lp$ represents the number of time steps, after which the bond vectors forget their initial orientation.

\begin{definition}\label{def:LSAW} 
{\normalfont\textbf{Locally Self-Avoiding Random Walks (LSA-RWs)}}
A sequence of random bond vectors $\omega = \{w_i\}_{i=1}^{T_e}$ is Locally Self-Avoiding with persistence number $Lp>1$, if there exists $b_o>0$ for all $i,j \in [T_e]$ such that, 1) $\mathbb{E}[\norm{w_i}^2] = b_o^2$ and 2) $\mathbb{E}[ w_i.w_j]\sim b^2_o e^{\frac{-|j-i|}{Lp}}$, where $\mathbb{E}[.]$ is computed over all configurations of the chain induced by the dynamic randomness of the environment (equivalent to the thermal fluctuations in the environment in statistical physics).
\end{definition}

The first condition states that the expected magnitude of each bond vector is $b_o$. The second condition shows that the correlation between $|i-j|$ consecutive bond vectors is a decaying exponential with the correlation length $Lp$ (persistence number), which is the number of time steps after which the bond vectors forget their original orientation. Note that despite the redundancy of the first condition, we choose to include it separately in order to emphasize the finite expected magnitude of the bond vectors.
\begin{figure*}[h!]
\begin{center}
\includegraphics[width=1 \textwidth]{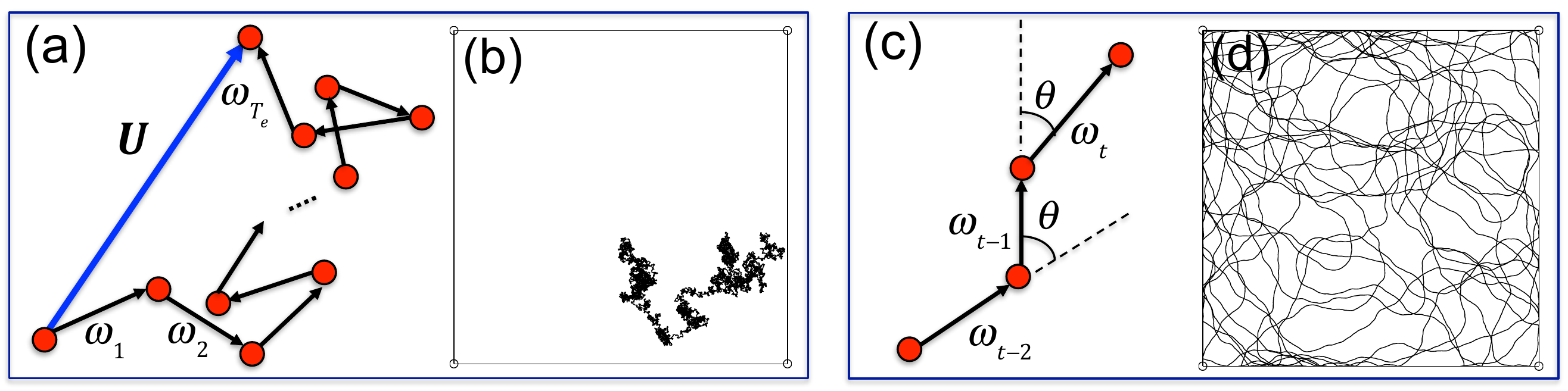}
\end{center}
\caption{ A chain (or trajectory) is shown as a sequence of $T_e$ random bond vectors $\{\boldsymbol{\omega}_i\}_{i=1..T_e}$. In a freely-jointed chain (FJC) (a), the orientation of the bond vectors are independent of one another (Random Walk). In a freely-rotating chain (FRC) (c), the correlation angle $\theta$ is invariant between every two consecutive bond vectors, which induces a finite stiffness in the chain. (b, d) A qualitative comparison between an FJC (b) and an FRC with $\theta\approx5.7^\circ$ (d), in a 2D environment of size $400\times400$ for $20000$ number of steps.}
\label{fig:Polymer-def}
\end{figure*}

\vspace{-0.1in}
\section{Methods}\label{sec:methods}
We introduce the method of polymer-based exploration in reinforcement learning (PolyRL), which borrows concepts from Statistical Physics \cite{de1979scaling, doi1988theory} to induce persistent trajectories in continuous state-action spaces (Refer to Figure \ref{fig:Polymer-def} for a schematic of simplified polymer models and the Appendix (Section \ref{sec:poly-models}) for more information regarding polymer models). As discussed below, our proposed technique balances exploration and exploitation using high-probability confidence bounds on a measure of spread in the state space. Algorithm \ref{alg:plyrl} presents the PolyRL pseudo code. The method of action sampling is provided in Algorithm \ref{alg:actionSampling} in the Appendix (Section \ref{sec:action-sampling}).

The PolyRL agent chooses the sequence of actions in $\actions$ such that every two consecutive action vectors are restricted in their orientation with the mean [correlation] angle $\theta$. In order to induce persistent trajectories in the state space, the agent uses a measure of spread in the visited states in order to ensure the desired expansion of the trajectory $\tau_\states$ in $\states$. We define \emph{radius of gyration squared},
\begin{align}
    U_g^2(\tau_\states) &:= \frac{1}{T_e-1} \sum_{s \in \tau_\states} d^2(s,\Bar{\tau}_\states),\label{eq:radius_gyration}
\end{align}
as a measure of the spread of the visited states in the state space $\states$, where $d(s,\bar{\tau}_\states)$ is a metric defined on the state space $\states$, and serves as a measure of distance between the visited state $s$ and the empirical mean of all visited states $\bar{\tau}_\states$. Also known as the center of mass, $\bar{\tau}_\states$ is calculated as, $\Bar{\tau}_{\states}:=\frac{1}{T_e}\sum_{s \in \tau_\states} s$.

\begin{algorithm}[h!]
\caption{PolyRL Algorithm}
\label{alg:plyrl}
\begin{algorithmic}
\REQUIRE{Exploration factor $\beta$, Average correlation angle $\theta$ and Correlation variance $\sigma^2$}
\FOR {N in total number of episodes}
	\STATE $\delta \leftarrow 1- e^{-\beta N}$ \COMMENT{An increasing function of the the episode number.}
	\STATE Sample ${\bf a}_0$ and ${\bf s}_0$ from an initial 	state-action distribution, and Exploit-flag $\leftarrow 0$ 
	\COMMENT{If Exploit-flag is $1$, the agent uses the target policy to select action, and explores otherwise.}
	\REPEAT
	    \IF{Exploit-flag == 1}
	        \STATE Draw a random number $\kappa\sim\mathcal{N}\left(0,1\right)$
	        \IF{$\kappa\leq\delta$}
	            \STATE ${\bf a}_t\sim \pi_\mu$ \COMMENT{action is drawn from the target policy}
	        \ELSE
	            \STATE Exploit-flag $\leftarrow 0$
	            \STATE Start a new exploratory trajectory by setting the radius of gyration squared to zero
	            \STATE Draw random number $\eta \sim \mathcal{N}( \theta, \sigma^2)$
		    	\STATE ${\bf a}_t\sim \pi_{\mbox{\scriptsize PolyRL}}(\eta, \bf a_{t-1})$
		    \ENDIF
	    \ELSE
		    \STATE Compute the change in the radius of gyration squared $\Delta U_g^2$ letting $d=L_2$-norm and
		    \STATE using eq. \eqref{eq:radius_gyration}.
		    \STATE Calculate $UB$ and $LB$ using eqs. \eqref{eq:upper} and \eqref{eq:lower}.
		    \IF{$\Delta U_g^2\geq LB$ and $\Delta U_g^2\leq UB$}
		   	    \STATE Draw random number $\eta \sim \mathcal{N}( \theta, \sigma^2)$
		   	    \STATE ${\bf a}_t\sim \pi_{\mbox{\scriptsize PolyRL}}(\eta, \bf a_{t-1})$
		    \ELSE
		        \STATE Exploit-flag $\leftarrow 1$, and ${\bf a}_t\sim \pi_\mu$
            \ENDIF
		\ENDIF
	\UNTIL{the agent reaches an absorbing state or the maximum allowed time steps in an episode}
\ENDFOR
\end{algorithmic}
\end{algorithm}

At each time step, the agent calculates the radius of gyration squared (Equation \ref{eq:radius_gyration}) and compares it with the obtained value from the previous time step. If the induced trajectory in the state space is LSA-RW, it maintains an expected stiffness described by a bounded change in $U_g^2$. Refer to Theorems \ref{theory:upper} and \ref{theory:lower} in Section \ref{sec:theory} for detailed information regarding the calculation of the bounds. In brief, the two theorems present high-probability confidence bounds on upper local sensitivity $UB$ and lower local sensitivity $LB$, respectively. The lower bound ensures that the chain remains LSA-RW and expands in the space, while the upper bound prevents the agent from moving abruptly in the environment (The advantage of using LSA-RWs to explore the environment can be explained in terms of their high expansion rate, which is presented in Proposition \ref{prop:expanding} in the Appendix (Section \ref{sec:poly-models})). If the computed $\Delta U_g^2$ is in the range $[LB,UB]$, the agent continues to perform PolyRL action sampling method (Algorithm \ref{alg:actionSampling} in the Appendix). Otherwise, it samples the next action using the target policy. Due to the persistence of the PolyRL chain of exploratory actions, the orientation of the last greedy action is preserved for $Lp$ (persistence) number of steps. As for the trajectories in the state space, upon observing correlation angles above $\pi/2$, the exploratory trajectory is broken and the next action is chosen with respect to the target policy $\pi_\mu$. 

\vspace{-0.1in}
\section{Theory}\label{sec:theory}
In this section, we derive the upper and lower confidence bounds on the local sensitivity for the radius of gyration squared between $\tau_\states$ and $\tau^{\prime}_\states$ (All proofs are provided in the Appendix (Section \ref{sec:proofs})). Given the trajectory $\tau_\states$ and the corresponding radius of gyration squared $U_g^2(\tau_\states)$ and persistence number $Lp_{\tau_\states}>1$, we seek a description for the permissible range of $U_g^2(\tau^{\prime}_\states)$ such that the stiffness of the trajectory is preserved. Note that the derived equations for the upper and lower confidence bounds are employed in the PolyRL algorithm (Algorithm \ref{alg:plyrl}) in bounding the change in $U_g^2$ to ensure the formation of locally self-avoiding walks in the trajectory of visited states in the state space. 

\textbf{High-probability upper bound -} We define the upper local sensitivity on $U_g^2$ upon observing new state $s_{T_e} \in \states$ as,
\begin{align}
    ULS_{Ug^2}(\tau_\states) := \sup_{s_{T_e} \in \Omega} U_g^2(\tau'_\states) - U_g^2(\tau_\states).\label{eq:upper_bound_def}
\end{align}
Given the observed state trajectory $\tau_\states$ with persistence number $Lp_{\tau_\states}$, the upper local sensitivity $ULS_{Ug^2}$ provides the maximum change observed upon visiting the next accessible state $s^\prime\in\Omega$. With the goal of constructing the new trajectory $\tau'_\states$ such that it preserves the stiffness induced by $\tau_\states$, we calculate the high-probability upper confidence bound on $ULS_{Ug^2}$. To do so, we write the term $d^2(s,\bar{\tau}_\states)$ in Equation \ref{eq:radius_gyration} as a function of bond vectors $\omega_i$, which is presented in Lemma \ref{lem:bond-transform} given that $d= L_2$-norm. We further substitute the resulting  $U_g^2(\tau_\states)$ in Equation \ref{eq:upper_bound_def} with the obtained expression from Equation \ref{eq:radius_gyration}.

\begin{lemma}\label{lem:bond-transform}
Let $\tau_\states = (s_0, \dots, s_{T_e-1})$ be the trajectory of visited states, $s_{T_e}$ be a newly visited state and $\boldsymbol{\omega}_i = s_i-s_{i-1}$ be the bond vector that connects two consecutive visited states $s_{i-1}$ and $s_{i}$. Then we have,
 \begin{align}
      \norm{s_{T_e}- \bar{\tau}_\states}^2 = \norm{\boldsymbol{\omega}_{T_e} + \frac{1}{T_e} \left[ \sum_{i=1}^{T_e-1} i \boldsymbol{\omega}_i \right]}^2.\label{eq:lem-bond-transform}
 \end{align}
\end{lemma}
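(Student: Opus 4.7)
The plan is to unfold each state as its initial position plus a telescoping sum of bond vectors and then reorganize the sum so the coefficient of $\boldsymbol{\omega}_k$ comes out to $k/T_e$ for $k < T_e$ and $1$ for $k = T_e$. Concretely, I would first write $s_i = s_0 + \sum_{k=1}^{i} \boldsymbol{\omega}_k$ for every $i \ge 1$, which follows directly from the definition $\boldsymbol{\omega}_i = s_i - s_{i-1}$. I would then plug this into the center of mass $\bar{\tau}_\states = \tfrac{1}{T_e}\sum_{i=0}^{T_e-1} s_i$, so that the $s_0$ terms combine to give $s_0$ and the remaining double sum becomes $\tfrac{1}{T_e}\sum_{i=1}^{T_e-1}\sum_{k=1}^{i}\boldsymbol{\omega}_k$.

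The next step is a Fubini-style index swap: interchanging the order of summation turns $\sum_{i=1}^{T_e-1}\sum_{k=1}^{i}\boldsymbol{\omega}_k$ into $\sum_{k=1}^{T_e-1}(T_e-k)\boldsymbol{\omega}_k$, since for each $k$ the value of $i$ ranges over $\{k,k+1,\dots,T_e-1\}$, giving $T_e - k$ terms. This yields
\begin{equation*}
\bar{\tau}_\states = s_0 + \frac{1}{T_e}\sum_{k=1}^{T_e-1}(T_e-k)\,\boldsymbol{\omega}_k.
\end{equation*}
Meanwhile $s_{T_e} = s_0 + \sum_{k=1}^{T_e}\boldsymbol{\omega}_k$, so subtracting cancels $s_0$, isolates $\boldsymbol{\omega}_{T_e}$, and leaves $\sum_{k=1}^{T_e-1}\bigl(1 - \tfrac{T_e-k}{T_e}\bigr)\boldsymbol{\omega}_k = \tfrac{1}{T_e}\sum_{k=1}^{T_e-1} k\,\boldsymbol{\omega}_k$. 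Taking the squared $L_2$-norm of the resulting vector identity gives the claimed equation \eqref{eq:lem-bond-transform}.

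There is no real obstacle here; the result is a bookkeeping identity rather than an inequality or probabilistic statement. The only place where a small amount of care is needed is matching the indexing convention: the trajectory $\tau_\states$ contains $T_e$ states indexed $s_0,\dots,s_{T_e-1}$ (so the normalization in $\bar{\tau}_\states$ is $1/T_e$), while the bond vectors for that trajectory run $\boldsymbol{\omega}_1,\dots,\boldsymbol{\omega}_{T_e-1}$ and $\boldsymbol{\omega}_{T_e}$ is the new bond introduced by the freshly visited state $s_{T_e}$. Keeping those ranges straight is what makes the coefficient $(T_e - k)/T_e$ collapse to $k/T_e$ after subtraction, which is the essential algebraic step.
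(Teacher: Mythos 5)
Your proof is correct and follows essentially the same route as the paper's: both are direct telescoping computations that express $s_{T_e} - \bar{\tau}_\states$ as a linear combination of the bond vectors and count how many times each $\boldsymbol{\omega}_k$ appears, yielding the coefficient $k/T_e$. The only cosmetic difference is that you anchor every state at $s_0$ (writing $s_i = s_0 + \sum_{k\le i}\boldsymbol{\omega}_k$) and then swap the order of summation, whereas the paper anchors at $s_{T_e-1}$ and telescopes each difference $s_{T_e-1}-s_i$ explicitly.
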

The result of Lemma \ref{lem:bond-transform} will be used in the proof of Theorem \ref{theory:upper}, as shown in the Appendix (Section \ref{sec:proofs}). In Theorem \ref{theory:upper}, we provide a high-probability upper bound on $ULS_{Ug^2}(\tau_\states)$. 

\begin{theorem}[Upper-Bound Theorem]\label{theory:upper}
Let $\delta \in (0,1)$ $\tau_\states$ be an LSA-RW in $\states$ induced by PolyRL with the persistence number $Lp_{\tau_\states}>1$ within episode $N$, $\omega_{\tau_\states}=\{\boldsymbol{\omega}_i\}_{i=1}^{T_e-1}$ be the sequence of corresponding bond vectors, where $T_e>0$ denotes the number of bond vectors within $\tau_\states$, and $b_o$ be the average bond length. The upper confidence bound for $ULS_{Ug^2}(\tau_\states)$ with probability of at least $1-\delta$ is, 

\begin{align}\label{eq:upper}
  UB = &\Lambda (T_e,\tau_\states)\\ \nonumber +&\frac{1}{\delta}{\left[ \Gamma (T_e, b_o, \tau_\states) + \frac{2b^2_o}{T_e^2} \sum_{i=1}^{T_e-1} i  e^{\frac{-(T_e-i)}{Lp_{\tau_\states}}} \right]},
\end{align}
where,
\begin{align}
    \Lambda (T_e,\tau_\states) =& -\frac{1}{T_e-1} U_g^2(\tau_\states) \\
    \Gamma (T_e, b_o, \tau_\states) =& \frac{b^2_o}{T_e}  + \frac{\norm{\sum_{i=1}^{T_e-1} i\boldsymbol{\omega}_i}^2}{T_e^3}
\end{align}
\end{theorem}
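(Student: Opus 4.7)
The plan is to prove the bound by first deriving an algebraic identity that relates $U_g^2(\tau'_\states)$ to $U_g^2(\tau_\states)$ and an incremental term proportional to $\|s_{T_e}-\bar{\tau}_\states\|^2$, then rewriting that increment in bond-vector coordinates via Lemma \ref{lem:bond-transform}, and finally using the LSA-RW moment conditions together with Markov's inequality to convert the worst-case statement (the supremum over $\Omega$) into a high-probability bound.

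First I would perform a standard incremental sample-variance computation. Writing $\bar{\tau}'_\states$ as a convex combination of $\bar{\tau}_\states$ and $s_{T_e}$ so that $\bar{\tau}'_\states - \bar{\tau}_\states \propto s_{T_e}-\bar{\tau}_\states$, and exploiting the mean-centering identity $\sum_{i=0}^{T_e-1}(s_i - \bar{\tau}_\states)=0$ to kill the cross terms that would otherwise appear upon re-centering, one collects everything into
\begin{align*}
U_g^2(\tau'_\states) - U_g^2(\tau_\states) = \Lambda(T_e,\tau_\states) + c(T_e)\,\|s_{T_e}-\bar{\tau}_\states\|^2,
\end{align*}
with $\Lambda$ as stated in the theorem and $c(T_e)>0$ a small factor whose $1/T_e$ scaling matches the normalizations appearing inside $\Gamma$. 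The first summand is deterministic given $\tau_\states$, so the remaining work is confined to the nonnegative random term.

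Next I would invoke Lemma \ref{lem:bond-transform} to recast that increment as $\|\omega_{T_e} + \tfrac{1}{T_e}\sum_{i=1}^{T_e-1} i\,\omega_i\|^2$, expand the squared norm, and keep the expansion under the expectation before distributing by linearity. Applying the LSA-RW conditions of Definition \ref{def:LSAW}, the term $\mathbb{E}[\|\omega_{T_e}\|^2]=b_o^2$ contributes the $b_o^2/T_e$ piece of $\Gamma$; the already-deterministic contribution $\tfrac{1}{T_e^2}\|\sum_{i=1}^{T_e-1} i\,\omega_i\|^2$ is the remaining piece of $\Gamma$; and the cross term $\tfrac{2}{T_e}\,\omega_{T_e}\!\cdot\!\sum_i i\,\omega_i$ combined with $\mathbb{E}[\omega_i\cdot\omega_{T_e}]\sim b_o^2 e^{-(T_e-i)/Lp_{\tau_\states}}$ yields the exponential-weighted sum with coefficient $2b_o^2/T_e^2$. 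Since $\|s_{T_e}-\bar{\tau}_\states\|^2\ge 0$, one application of Markov's inequality then converts this expectation into a confidence statement, which is precisely what injects the $1/\delta$ factor of the stated $UB$.

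The main obstacle will be interpreting "$\sup_{s_{T_e}\in\Omega}$" so that a single Markov step suffices: one must read the supremum as a high-probability statement under the law of the next bond vector $\omega_{T_e}$, apply Markov to the full nonnegative squared norm (not term-by-term, because the cross term $\omega_{T_e}\!\cdot\!\sum_i i\,\omega_i$ is not sign-definite), and only afterwards distribute the expectation by linearity so that the exponential decays appear with the correct $2i/T_e^2$ weight. A secondary bookkeeping concern is matching the $1/(T_e\pm 1)$-type factors arising from the incremental variance identity to the cleaner $1/T_e$ normalization in $\Gamma$ and $\Lambda$; this is a lower-order discrepancy that is absorbed into the stated constants of the bound.
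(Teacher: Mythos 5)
Your proposal follows essentially the same route as the paper's proof: the same incremental variance decomposition isolating $\Lambda(T_e,\tau_\states)$ plus a $\tfrac{1}{T_e}\norm{s_{T_e}-\bar{\tau}_\states}^2$ term, the same use of Lemma \ref{lem:bond-transform} to pass to bond-vector coordinates, the same application of the LSA-RW moment conditions to evaluate $\mathbb{E}[\norm{\boldsymbol{\omega}_{T_e}}^2]$ and the cross-correlation sum, and a single Markov inequality on the nonnegative squared norm to produce the $1/\delta$ factor. Your closing remarks about reading the supremum over $\Omega$ as a high-probability statement on the law of $\boldsymbol{\omega}_{T_e}$, and about applying Markov before distributing the expectation, correctly identify exactly the (somewhat informal) steps the paper itself takes.
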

Equation \ref{eq:upper} provides an upper bound on the pace of the trajectory expansion in the state space (denoted by $\Delta U_g^2\leq UB$ in Algorithm \ref{alg:plyrl}) to prevent the agent from moving abruptly in the environment, which would otherwise lead to a break in its temporal correlation with the preceding states. Similarly, we introduce the lower local sensitivity $LLS_{Ug^2}$, which provides the minimum change observed upon visiting the next accessible state $s^\prime\in\Omega$.

\textbf{High-probability lower bound -} In this part, We define the lower local sensitivity on $U_g^2$ upon observing new state $s_{T_e} \in \states$ as,
\begin{align}\label{eq:lower_bound_def}
     LLS_{Ug^2}(\tau_\states) := \inf_{s_{T_e} \in \Omega} U_g^2(\tau'_\states) - U_g^2(\tau_\states).
\end{align}
 We further compute the high-probability lower confidence bound on $LLS_{Ug^2}$ in order to guarantee the expansion of the trajectory $\tau_\states$ upon visiting the next state.

\begin{theorem}[Lower-Bound Theorem]\label{theory:lower}
Let $\delta \in (0,1)$ and $\tau_\states$ be an LSA-RW in $\states$ induced by PolyRL with the persistence number $Lp_{\tau_\states}>1$ within episode $N$, $\omega_{\tau_\states}=\{\boldsymbol{\omega}_i\}_{i=1}^{T_e-1}$ be the sequence of corresponding bond vectors, where $T_e>0$ denotes the number of bond vectors within $\tau_\states$, and $b_o$ be the average bond length. The lower confidence bound on $LLS_{Ug^2}(\tau_\states)$ at least with probability $1-\delta$ is, 
\small
\begin{align}\label{eq:lower}
    LB =& \Lambda (T_e,\tau_\states)
    +(1-\sqrt{2-2\delta})\times\nonumber\\
    &\left[\Gamma (T_e, b_o, \tau_\states) +\frac{(T_e-1)(T_e-2)}{T_e^2} b^2_0 e^{\frac{-|T_e-1|}{Lp_{\tau_\states}}}   \right],
\end{align}
where,
\begin{align}
    \Lambda (T_e,\tau_\states) =& -\frac{1}{T_e-1} U_g^2(\tau_\states) \\
    \Gamma (T_e, b_o, \tau_\states) =& \frac{b^2_o}{T_e}  + \frac{\norm{\sum_{i=1}^{T_e-1} i\boldsymbol{\omega}_i}^2}{T_e^3}
\end{align}
\normalsize
\end{theorem}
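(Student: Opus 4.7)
The plan is to mirror the argument of Theorem \ref{theory:upper} but replace the upper Markov tail with a one-sided second-moment lower tail. I would proceed in three main steps, followed by an arithmetic clean-up.

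First, I would rewrite the target quantity $U_g^2(\tau'_\states) - U_g^2(\tau_\states)$ by expanding the new center of mass $\bar{\tau}'_\states$ relative to $\bar{\tau}_\states$, yielding a decomposition whose deterministic-given-$\tau_\states$ piece is $\Lambda(T_e,\tau_\states)$ and whose random piece is a positive multiple of $\norm{s_{T_e}-\bar{\tau}_\states}^2$. Taking the infimum in the definition of $LLS_{Ug^2}$ then reduces the problem to a lower-tail control of the nonnegative random variable $X := \frac{1}{T_e}\norm{s_{T_e}-\bar{\tau}_\states}^2$.

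Second, I would substitute Lemma \ref{lem:bond-transform} into $X$, expand the square into the three pieces $\norm{\omega_{T_e}}^2$, $\frac{1}{T_e^2}\norm{\sum_{i=1}^{T_e-1} i\,\omega_i}^2$, and the cross term $\frac{2}{T_e}\omega_{T_e}\cdot\sum_{i=1}^{T_e-1} i\,\omega_i$, and take expectations conditional on $\tau_\states$. Using both clauses of Definition \ref{def:LSAW}, i.e.\ $\E[\norm{\omega_{T_e}}^2]=b_o^2$ and $\E[\omega_{T_e}\cdot\omega_i]=b_o^2 e^{-(T_e-i)/Lp_{\tau_\states}}$, this produces exactly the same mean $\E[X] = \Gamma(T_e,b_o,\tau_\states) + \frac{2 b_o^2}{T_e^2}\sum_{i=1}^{T_e-1} i\,e^{-(T_e-i)/Lp_{\tau_\states}}$ that fed the upper-bound proof, so no new first-moment computation is required at this stage.

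Third, I would apply the Paley--Zygmund inequality $\P[X > \theta\,\E[X]] \geq (1-\theta)^2 (\E[X])^2/\E[X^2]$ together with a bound of the form $\E[X^2] \leq 2(\E[X])^2$. The latter follows from an $L^4$ expansion of the bond sum in which the four-vector moments $\E[\omega_i\cdot\omega_j\,\omega_k\cdot\omega_l]$ are majorised by products of the LSA-RW pair correlations. Solving $(1-\theta)^2/2 \geq 1-\delta$ gives $\theta \leq 1-\sqrt{2-2\delta}$, so that $X \geq (1-\sqrt{2-2\delta})\,\E[X]$ with probability at least $1-\delta$. To match the closed form in the statement I would then lower-bound the cross-correlation sum by replacing each exponential with its smallest value $e^{-(T_e-1)/Lp_{\tau_\states}}$ and using $\sum_{i=1}^{T_e-1} i = T_e(T_e-1)/2$, which (after a small boundary adjustment) yields the advertised $\frac{(T_e-1)(T_e-2)}{T_e^2} b_o^2 e^{-(T_e-1)/Lp_{\tau_\states}}$; adding back $\Lambda(T_e,\tau_\states)$ gives $LB$.

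The main obstacle is the fourth-moment control $\E[X^2] \leq 2(\E[X])^2$ that Paley--Zygmund demands. Definition \ref{def:LSAW} pins down only first and second moments of the bond vectors, so the fourth-moment estimate must be produced by a careful expansion that controls each four-vector correlation through the decaying-exponential pair structure without assuming independence the definition does not grant. A plain one-sided Chebyshev or Cantelli detour cannot rescue the argument here, because it would deliver a $1/\sqrt{\delta}$-type factor rather than the specific $1-\sqrt{2-2\delta}$ multiplier appearing in the theorem, so the second-moment method and its accompanying fourth-moment estimate are where genuine care is needed.
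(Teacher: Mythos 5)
Your proposal is correct and follows essentially the same route as the paper: the same incremental decomposition into $\Lambda(T_e,\tau_\states)$ plus an infimum over $\frac{1}{T_e}\norm{s_{T_e}-\bar{\tau}_\states}^2$, the same substitution of Lemma \ref{lem:bond-transform}, the same first-moment computation, the Paley--Zygmund inequality with $c_1=2$ yielding the factor $1-\sqrt{2-2\delta}$, and the same lower-bounding of the cross-correlation sum by its smallest exponential term. The fourth-moment obstacle you flag is real and is exactly the point where the paper itself does not stay within Definition \ref{def:LSAW}: it additionally posits $\norm{\boldsymbol{\omega}_{T_e}}^2\sim\mathcal{N}(b_o^2,\sigma^2)$ to evaluate $\ex{\norm{\boldsymbol{\omega}_{T_e}}^4}$ and then asserts $\ex{Y^2}\leq 2\,\ex{Y}^2$ by a term-by-term comparison, so your caution there is warranted rather than a defect of your plan.
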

Equation \ref{eq:lower} provides a lower bound on the change in the expansion of the trajectory (\emph{e.g.} $\Delta U_g^2\geq LB$ as shown in Algorithm \ref{alg:plyrl}). 

The factor $\delta\in[0,1]$ that arises in Equations \ref{eq:upper} and \ref{eq:lower} controls the tightness of the confidence interval. In order to balance the trade-off between exploration and exploitation, we choose to increase $\delta$ with time, as increasing $\delta$ leads to tighter bounds and thus, higher probability of exploitation. In addition, the exploration factor $\delta$ determines the probability of switching from exploitation back to starting a new exploratory trajectory. Note that for experimental purposes, we let $\delta = 1 - e^{-\beta N}$, where the exploration factor $\beta\in(0, 1)$ is a hyper parameter and $N$ is the number of elapsed episodes.

The following corollary is an immediate consequence of Theorems \ref{theory:upper} and \ref{theory:lower} together with Assumption \ref{as:1}.
\begin{corollary}\label{cor:PolyRL-LSAW}
Given that Assumption \ref{as:1} is satisfied, any exploratory trajectory induced by PolyRL Algorithm (ref. Algorithm \ref{alg:plyrl}) with high probability is an LSA-RW.
\end{corollary}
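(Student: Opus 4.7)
The plan is to verify the two conditions of Definition \ref{def:LSAW} for the bond vector sequence $\omega_{\tau_\states}$ produced by Algorithm \ref{alg:plyrl}. The key observation is that the bounds $UB$ and $LB$ derived in Theorems \ref{theory:upper} and \ref{theory:lower} were obtained \emph{under the hypothesis} that $\tau_\states$ is an LSA-RW with persistence number $Lp_{\tau_\states}>1$, and the algorithm actively tests the criterion $\Delta U_g^2 \in [LB,UB]$ at every exploratory step. So the strategy is to show that, whenever the algorithm continues in exploratory mode, the sampled bond vectors must satisfy the two moment conditions of Definition \ref{def:LSAW} with high probability.

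First, I would establish condition (1), $\mathbb{E}[\|\boldsymbol{\omega}_i\|^2] = b_o^2$. The PolyRL sampling routine (Algorithm \ref{alg:actionSampling}) draws each action $a_t$ conditioned on $a_{t-1}$ with mean correlation angle $\theta$ and variance $\sigma^2$, so action magnitudes are i.i.d.-like in distribution. By Assumption \ref{as:1}, the Lipschitz constant $C$ then transports this uniformity in action to a uniform upper bound on the second moment of the resulting bond vector $\boldsymbol{\omega}_i = s_i - s_{i-1}$; identifying this common value with $b_o^2$ gives condition (1).

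Second, and this is the main work, I would verify condition (2), the exponentially decaying correlation $\mathbb{E}[\boldsymbol{\omega}_i \cdot \boldsymbol{\omega}_j] \sim b_o^2 e^{-|j-i|/Lp_{\tau_\states}}$. The action-level geometric constraint gives $\mathbb{E}[a_t \cdot a_{t-1}] / \|a_t\|\|a_{t-1}\| = \cos\theta$, which, iterated via the tower property, yields a geometric decay at the action level with rate $\cos\theta$, i.e., $\mathbb{E}[a_i \cdot a_j] \propto (\cos\theta)^{|i-j|}$. Using Assumption \ref{as:1} to transfer this to the induced bond vectors (close actions produce close transition laws, hence correlated bond directions), one obtains $\mathbb{E}[\boldsymbol{\omega}_i \cdot \boldsymbol{\omega}_j] \asymp b_o^2 (\cos\theta)^{|j-i|}$, and setting $Lp_{\tau_\states} := -1/\ln\cos\theta$ rewrites this as the required exponential $b_o^2 e^{-|j-i|/Lp_{\tau_\states}}$. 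The condition $\theta < \pi/2$ enforced by the algorithm (exploration is broken otherwise) guarantees $Lp_{\tau_\states} > 1$.

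Finally, I would combine these with Theorems \ref{theory:upper} and \ref{theory:lower}: conditional on the algorithm remaining in exploratory mode at step $T_e$, the test $\Delta U_g^2 \in [LB,UB]$ passes, so with probability at least $1-\delta$ the new bond vector $\boldsymbol{\omega}_{T_e}$ satisfies the moment bounds consistent with the LSA-RW hypothesis; an induction on $T_e$ and a union bound over the $T_e$ exploratory steps (with $\delta$ chosen appropriately small per step) lifts this to an LSA-RW statement for the whole trajectory with high probability. The main obstacle I anticipate is the transfer step from action-angle correlation to bond-vector correlation, because Assumption \ref{as:1} only controls the total-variation distance between transition kernels and does not directly give a correlation statement between next-state increments; bridging this likely requires an auxiliary coupling argument or a bound of the form $\|\mathbb{E}[s_{t+1}\mid s_t,a]-\mathbb{E}[s_{t+1}\mid s_t,a']\| \le C'\|a-a'\|$, derived from Assumption \ref{as:1} plus a mild boundedness condition on $\states$.
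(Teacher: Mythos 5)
Your proposal follows essentially the same route as the paper's own proof: condition (1) of Definition \ref{def:LSAW} is obtained by transporting the boundedness of the action chain to the state increments via the Lipschitzness in Assumption \ref{as:1}, and condition (2) is obtained from the orientational persistence of the action-sampling scheme together with the algorithmic check of $\Delta U_g^2$ against the bounds of Theorems \ref{theory:upper} and \ref{theory:lower}, yielding the LSA-RW property with probability $1-\delta$. The obstacle you flag at the end---that a total-variation bound on the transition kernels does not by itself control expectations or correlations of the state increments without an additional boundedness or coupling argument---is genuine, but the paper's proof makes the same leap without addressing it, so your version is if anything the more explicit of the two.
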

The proof is provided and discussed in the Appendix (Section \ref{sec:proofs}).

\vspace{-0.1in}
\section{Related Work}\label{sec:related work}

A wide range of exploration techniques with theoretical guarantees (\emph{e.g.} PAC bounds) have been developed for MDPs with finite or infinitely countable state or action spaces \cite{kearns2002near, brafman2002r, strehl2004exploration,lopes2012exploration, azar2017minimax, white2010interval, Wang2020Q-learning}, however extending these algorithms to real-world settings with continuous state and action spaces without any assumption on the structure of state-action spaces or the reward function is impractical \cite{haarnoja2017reinforcement,VIME}.

Perturbation-based exploration strategies are, by nature, agnostic to the structure of the underlying state-action spaces and are thus suitable for continuous domains. Classic perturbation-based exploration strategies typically involve a perturbation mechanism at either the action-space or the policy parameter-space level. These methods subsequently employ stochasticity at the policy level as the main driving force for exploration \cite{deisenroth2013survey}. Methods that apply perturbation at the parameter level often preserve the noise distribution throughout the trajectory  \cite{ng2000pegasus,sehnke2010parameter,theodorou2010generalized, fortunato2017noisy, colas2018gep,plappert2018parameter}, and do not utilize the trajectory information in this regard. Furthermore, a majority of action-space perturbation approaches employ per-step independent or correlated perturbation \cite{wawrzynski2009real, lillicrap2015continuous,haarnoja2017reinforcement,xu2018learning}. For instance, \cite{lillicrap2015continuous} uses the Ornstein–Uhlenbeck (OU) process to produce auto-correlated additive noise at the action level and thus benefit from the correlated noise between consecutive actions. 

While maintaining correlation between consecutive actions is advantageous in many locomotion tasks \cite{morimoto2001acquisition, kober2013reinforcement,gupta2018meta}, it brings technical challenges due to the non-Markovian nature of the induced decision process \cite{perez2017non}, which leads to substantial dependence on the history of the visited states. This challenge is partially resolved in the methods that benefit from some form of parametric memory \cite{plappert2018parameter, lillicrap2015continuous, sharma2017learning}. However, they all suffer from the lack of \textit{informed orientational persistence}, \emph{i.e.} the agent's persistence in selecting actions that preserve the orientation of the state trajectory induced by the target policy. Particularly in sparse-reward structured environments, where the agent rarely receives informative signals, the agent will eventually get stuck in a small region since the analytical change on the gradient of greedy policy is minimal \cite{hare2019dealing, kang2018policy}. Hence, the agent's persistent behaviour in action sampling with respect to the local history (short-term memory) of the selected state-action pairs plays a prominent role in exploring the environments with sparse or delayed reward structures.

\begin{figure*}[h!]
\begin{center}
\includegraphics[width=\textwidth]{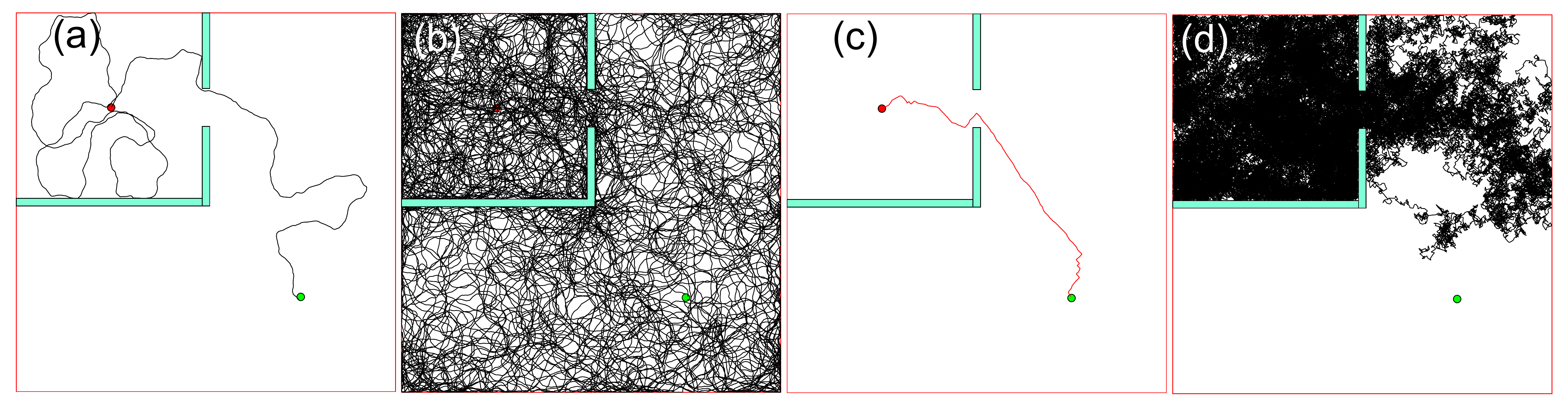}
\end{center}
\caption{Visualization of the agent's trajectory in a 2D navigation task with continuous state-action space and sparse-reward structure. The environment is composed of a $100\times100$ big chamber and a $50\times50$ smaller chamber. The start state is located inside the small chamber (the red circle with a diameter of $1$), and the goal is outside the small chamber (the green circle with a diameter of $1$). The agent receives $+100$ reward when it reaches the goal and $0$ reward elsewhere. (a-c) Performance of the PolyRL agent with the correlation angle $\theta \simeq 0.2$. (a) A sample trajectory of the PolyRL agent in one episode. (b) Coverage of the environment after $11$ episodes. (c) A sample trajectory of the PolyRL agent during the evaluation phase after learning the task. (d) The $\epsilon$-greedy agent's coverage of the environment after $11$ episodes.}
\label{fig:GridWorld}
\end{figure*}

\section{Experiments}\label{sec:experiment}

We assess the performance of PolyRL in comparison with that of other state-of-the-art exploration approaches combined with off-policy learning methods. In particular, we integrate PolyRL with three learning algorithms: (1) the Q-learning method (\cite{watkins1992q}) with linear function approximation in a 2D sparse continuous state-and-action navigation task, where the performance of PolyRL is compared with that of $\epsilon$-greedy; (2) the Soft Actor-Critic (SAC) algorithm \cite{haarnoja2018soft}, where PolyRL is combined with SAC (SAC-PolyRL) and replaces the random exploration phase during the first $10,000$ steps in the SAC algorithm and is subsequently compared with SAC as well as Optimistic Actor-Critic (OAC) \cite{ciosek2019better} methods; and (3) the deep deterministic policy gradients (DDPG) \cite{lillicrap2015continuous} algorithm, where PolyRL (DDPG-PolyRL) is assessed in comparison with additive uncorrelated Gaussian action space noise (DDPG-UC), correlated Ornstein-Uhlenbeck action space noise (DDPG-OU) \cite{uhlenbeck1930theory, lillicrap2015continuous}, adaptive parameter space noise (DDPG-PARAM) \cite{plappert2017parameter}, as well as the Fine Grained Action Repetition (DDPG-FiGAR) \cite{sharma2017learning} method.

The sets of experiments involving the learning methods DDPG and SAC are performed in MuJoCo high-dimensional continuous control tasks ``SparseHopper-V2'' ($\actions\subset\mathbb{R}^3$, $\states\subset\mathbb{R}^{11}$), ``SparseHalfCheetah-V2'' ($\actions\subset\mathbb{R}^6$, $\states\subset\mathbb{R}^{17}$), and ``SparseAnt-V2'' ($\actions\subset\mathbb{R}^8$, $\states\subset\mathbb{R}^{111}$)  (Refer to the Appendix (Section \ref{sec:aditional-baseline}) for the benchmark results of the same algorithms in the standard (dense-reward) MuJoCo tasks).

\textbf{Algorithm and Environment Settings -} The environment in our 2D sparse-reward navigation tasks either consists of only one $400\times400$ chamber (goal reward $+1000$), or a $50\times50$ room encapsulated by a $100\times100$ chamber. Initially positioned inside the small room, the agent's goal in the latter case is to find its way towards the bigger chamber, where the goal is located (goal reward $+100$) (Figure \ref{fig:GridWorld}). Moreover, in order to make the former task more challenging, in a few experiments, we introduce a \emph{puddle} in the environment, where upon visiting, the agent receives the reward $-100$. In order to assess the agent's performance, we integrate the PolyRL exploration algorithm with the Q-learning method with linear function approximation (learning rate $= 0.01$) and compare the obtained results with those of the $\epsilon$-greedy exploration with Q-learning. We subsequently plot the quantitative results for the former task and visualize the resulting trajectories for the latter. 

In the sparse MuJoCo tasks, the agent receives a reward of $+1$ only when it crosses a target distance $\lambda$, termed the \emph{sparsity threshold}. Different $\lambda$ values can change the level of difficulty of the tasks significantly. Note that due to the higher performance of SAC-based methods compared with that of DDPG-based ones, we have elevated $\lambda$ for SAC-based experiments, making the tasks more challenging. Moreover, we perform an exhaustive grid search over the corresponding hyper parameters for each task. The sparsity thresholds, the obtained hyper-parameter values, as well as the network architecture of the learning algorithms DDPG and SAC are provided in the Appendix (Section \ref{sec:hyperparams}).
\begin{figure*}[h!]
\begin{center}
\includegraphics[width=\textwidth]{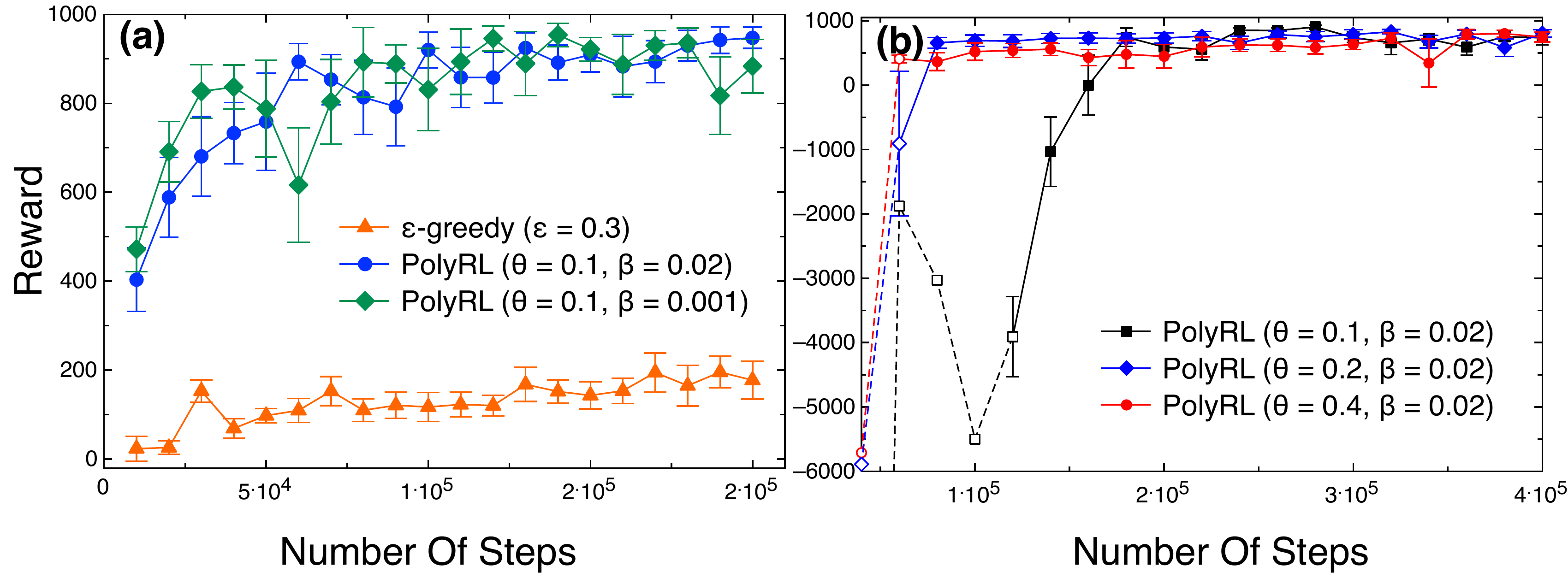}
\end{center}
\caption{The performance of PolyRL and $\epsilon$-greedy in a 2D continuous sparse-reward navigation task (the $400\times400$ chamber). (a) The environment does not include a puddle. (b) The environment contains a puddle. The first point on the $\theta = 0.1$ curve is not shown on the graph due to its minimal average reward value ($\simeq-30000$), which affects the range of the y-axis. It is connected to the next point via a dashed line. The data points on the dashed lines do not include error bars because of substantial variances. For the same reason (out of range values and large variances), the results for the $\epsilon$-greedy exploration method are not shown here.}
\label{fig:2DPlot}
\end{figure*}

\begin{figure*}[h!]
\begin{center}
\includegraphics[width=\textwidth]{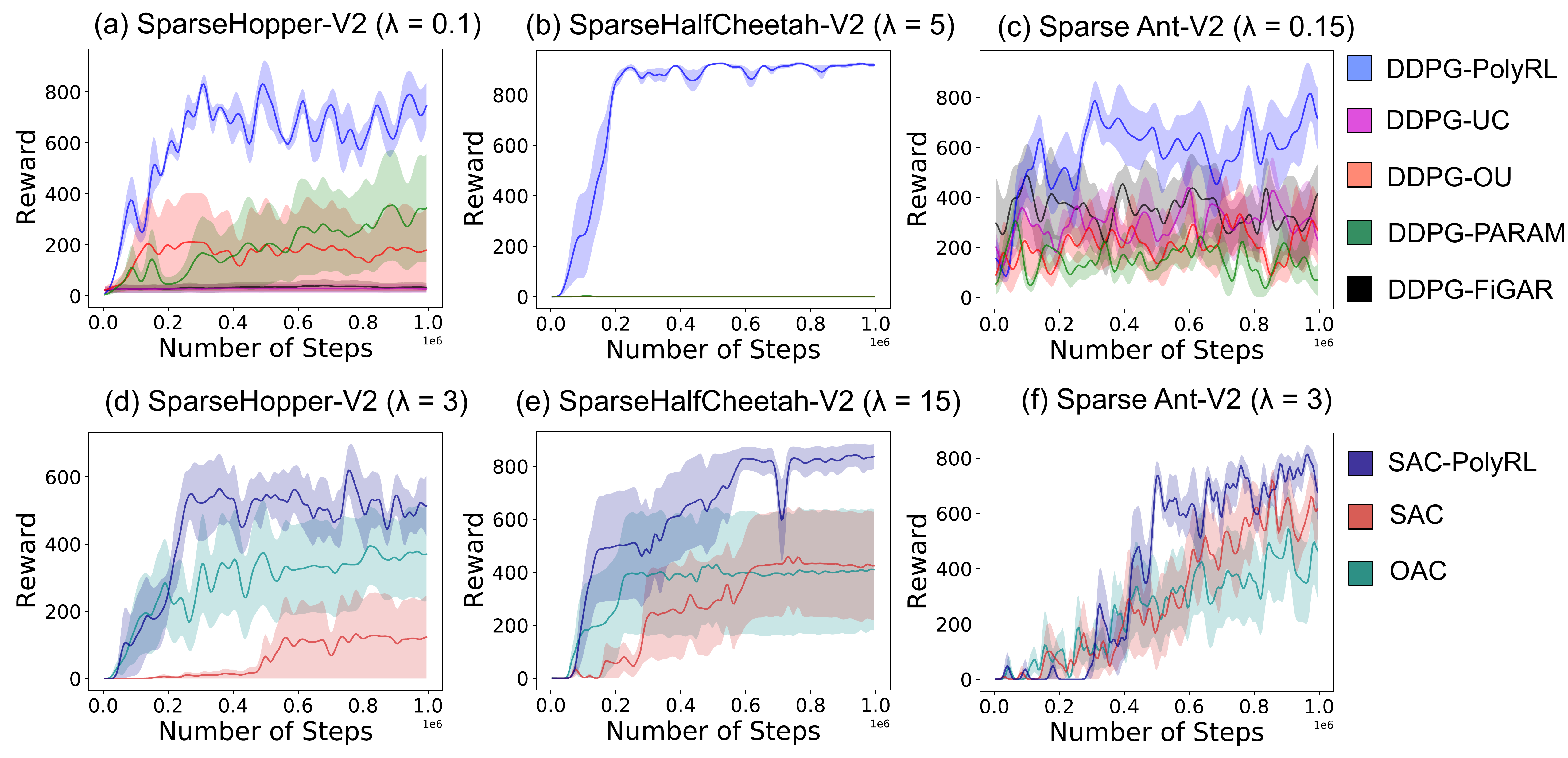}
\end{center}
\caption{The simulation results on ``SparseHopper-v2'' (a and d), ``SparseHalfCheetah-v2'' (b and e) and ``SparseAnt-v2'' (c and f). The results are averaged over $5$ random seeds. The error bars depict the standard error on the mean (Refer to the Appendix (Section \ref{sec:aditional-baseline}) for the benchmark results obtained from the baseline algorithms).}
\label{fig:result-run}
\end{figure*}

\textbf{Results and Discussion -} We present the qualitative results for the 2D sparse navigation task in Figure \ref{fig:GridWorld}. An example of PolyRL trajectory in one episode (Figure \ref{fig:GridWorld} (a)) demonstrates the expansion of the PolyRL agent trajectory in the environment. After $11$ episodes, the PolyRL agent exhibits a full coverage of the environment (Figure \ref{fig:GridWorld} (b)) and is consequently able to learn the task (Figure \ref{fig:GridWorld} (c)), while the $\epsilon$-greedy agent is not able to reach the goal state even once, and thus fails to learn the task (Figure \ref{fig:GridWorld} (d)). This visual observation highlights the importance of space coverage in sparse-reward tasks, where the agent rarely receives informative reinforcement from the environment. An effective trajectory expansion in the environment exposes the agent to the unvisited regions of the space, which consequently increases the frequency of receiving informative reinforcement and accelerates the learning process. In Figure \ref{fig:2DPlot}, the quantitative results for learning the task in a similar environment (the $400\times400$ chamber) are shown in both the absence (Figure \ref{fig:2DPlot} (a)) and presence (Figure \ref{fig:2DPlot} (b)) of a puddle. In both cases, the PolyRL exploration method outperforms $\epsilon$-greedy. In Figure \ref{fig:2DPlot} (b), we observe that trajectories with lower persistence (larger $\theta$) present a better performance compared with stiffer trajectories. Note that due to the existence of walls in these 2D tasks, the transition probability kernel is specifically non-smooth at the walls. Thus, the smoothness assumption on the transition probability kernel made earlier for the theoretical convenience does not apply in these specific environments. Yet, we empirically show that PolyRL still achieves a high performance in learning these tasks.

\begin{figure*}[h!]
\begin{center}
\includegraphics[width=\textwidth]{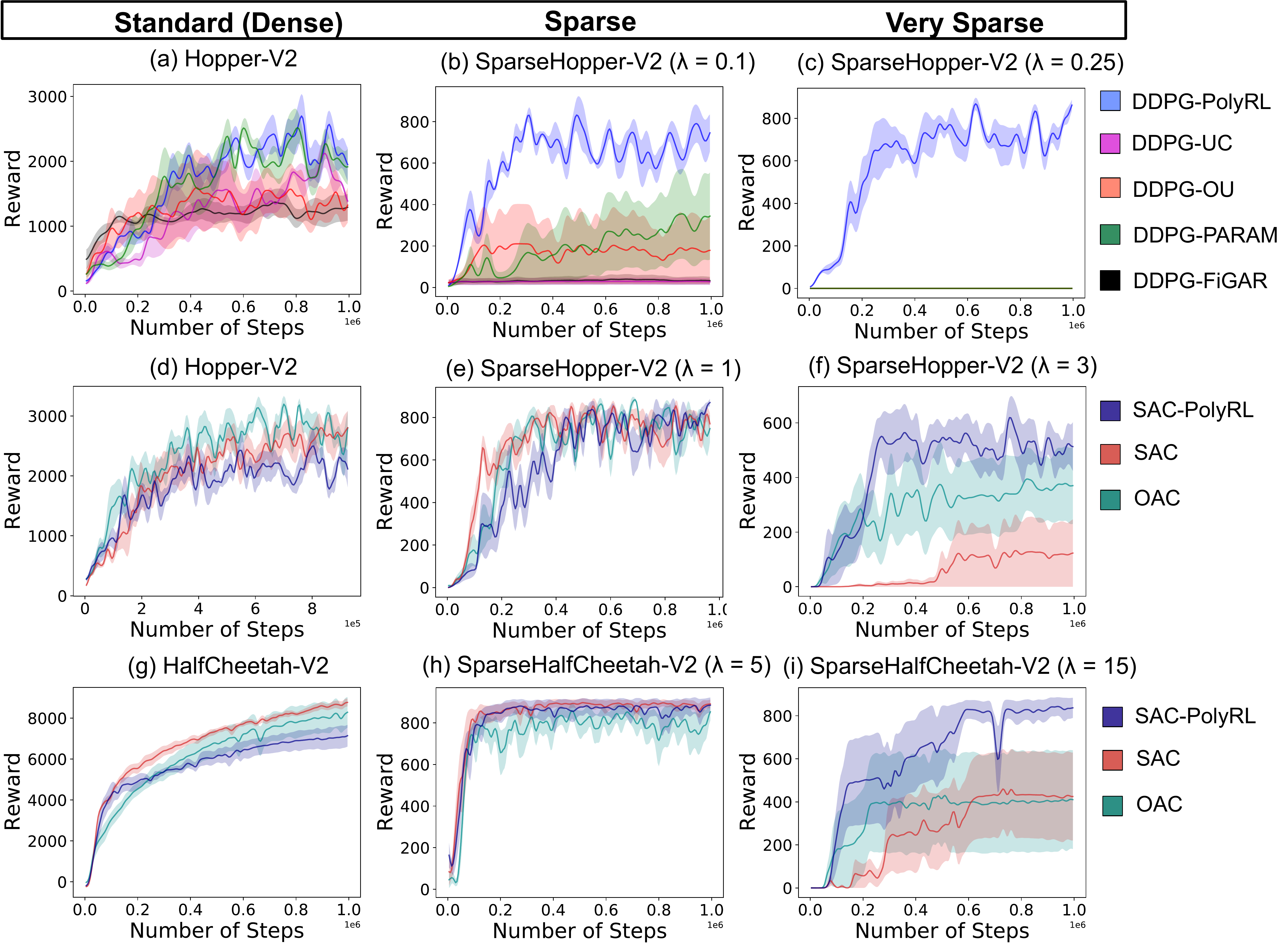}
\end{center}
\caption{PolyRL sensitivity to the change of sparsity threshold in some sample experiments. The column on the left depicts the results for standard (with dense reward) MuJoCo tasks. From left to right, the sparsity of the reward structure increases.}
\label{fig:sensitivity}
\end{figure*}
We illustrate the plotted results for the sparse MuJoCo tasks in Figure \ref{fig:result-run}. The obtained results for DDPG-based and SAC-based algorithms in SparseHopper-V2 (Figures \ref{fig:result-run} (a,~d)), SparseHalfCheetah-V2 (Figures \ref{fig:result-run} (b,~e)), and SparseAnt-V2 (Figures \ref{fig:result-run} (c,~f)) show that integrating the two learning methods with the exploration algorithm PolyRL leads to improvement in their performance. The results achieved by PolyRL exploration method confirms that the agent has been able to efficiently and sufficiently cover the space, receive informative reinforcement, and learn the tasks. The high performance of SAC-PolyRL is particularly significant, in the sense that PolyRL assists SAC in the data generation process only for the first $10,000$ steps. Yet, this short presence leads to a notable contribution in enhancing the performance of SAC.

Another notable feature of PolyRL is its relatively low sensitivity to the increase in sparsity threshold $\lambda$ compared to that of DDPG-based and SAC-based algorithms. Figure \ref{fig:sensitivity} illustrates the performance of PolyRL in some sample experiments for three different sparsity thresholds. The level of complexity of the tasks increases from left to right with the sparsity threshold $\lambda$. As the sparsity level changes from sparse to very sparse, the graphs demonstrate a sharp decrease in the performance of PolyRL counterparts, while the PolyRL performance stays relatively stable (Note that due to the reward structure in these sparse tasks and considering that the maximum number of steps in each episode is by default set to  $1000$ in the Gym environments, the maximum reward that an agent could get during each evaluation round cannot exceed $1000$). This PolyRL agent's behaviour can be explained by its relatively fast expansion in the space (Refer to Proposition 2 in the Appendix (Section \ref{sec:poly-models})), which leads to faster access to the sparsely distributed rewards compared with other DDPG-based and SAC-based methods. On the other hand, in standard tasks, where the reinforcement is accessible to the agents at each time-step, as PolyRL does not use the received rewards in its action selection process, it might skip the informative signals nearby and move on to the farther regions in the space, leading to possibly acquiring less amount of information and lower performance. In other words, the strength of PolyRL is most observable in the tasks where accessing information is limited or delayed.

\section{Conclusion}\label{sec:conclusion}

We propose a new exploration method in reinforcement learning (PolyRL), which leverages the notion of locally self-avoiding random walks and is designed for environments with continuous state-action spaces and sparse-reward structures. The most interesting aspect of our proposal is the explicit construction of each exploratory move based on the entire existing trajectory, rather than just the current observed state. While the agent chooses its next move based on its current state, the inherent locally self-avoiding property of the walk acts as an implicit memory, which governs the agent's exploratory behaviour. Yet this locally controlled behavior leads to an interesting global property for the trajectory, which is an improvement in the coverage of the environment. This feature, as well as not relying on extrinsic rewards in the decision-making process, makes PolyRL perfect for the sparse-reward tasks. We assess the performance of PolyRL in 2D continuous sparse navigation tasks, as well as three sparse high-dimensional simulation tasks, and show that PolyRL performs significantly better than the other exploration methods in combination with the baseline learning algorithms DDPG and SAC. Moreover, we show that compared to the performance of PolyRL counterparts, the performance of our proposed method has lower sensitivity to the increase in the sparsity of the extrinsic reward, and thus its performance stays relatively stable. Finally, a more adaptive version of PolyRL, which can map the changes in the action trajectory stiffness to that of the state trajectory, could be helpful in more efficient learning of the simulation tasks.

\section*{Acknowledgements}
The authors would like to thank Prof. Walter Reisner for providing valuable feedback on the initial direction of this work, and Riashat Islam for helping with the experiments in the early stages of the project. Computing resources were provided by Compute Canada and Calcul Qu\'ebec throughout the project, and by Deeplite Company for preliminary data acquisition, which the authors appreciate. Funding is provided by Natural Sciences and Engineering Research Council of Canada (NSERC).

\bibliography{References}
\bibliographystyle{icml2021}
\clearpage

\onecolumn

\appendix\label{sec:App}
\begin{center}\large\textbf{Appendix}\end{center}
Here, we provide additional information on different parts of the paper. In particular, in section \ref{sec:poly-models} we introduce and discuss two chain models in polymer physics. In section \ref{sec:proofs}, we provide the theoretical proofs of Theorems \ref{theory:upper} and \ref{theory:lower}, Lemma \ref{lem:bond-transform}, and Corollary \ref{cor:PolyRL-LSAW} in the manuscript. In section \ref{sec:action-sampling}, we present the action-sampling algorithm, and in section \ref{sec:aditional-baseline} we provide additional baseline results in the standard MuJoCo tasks. Finally, in section \ref{sec:hyperparams}, we provide the network architecture of the learning methods, as well as the PolyRL hyper parameters used in the experimental section.

\section{Polymer Models}\label{sec:poly-models}
In the field of \emph{Polymer Physics}, the conformations and interactions of polymers that are subject to thermal fluctuations are modeled using principles from statistical physics. In its simplest form, a polymer is modeled as an \emph{ideal chain}, where interactions between chain segments are ignored. The \emph{no-interaction} assumption allows the chain segments to cross each other in space and thus these chains are often called \emph{phantom chains} \cite{doi1988theory}. In this section, we give a brief introduction to two types of ideal chains.

\begin{figure*}[ht]
\begin{center}
\includegraphics[width=1 \textwidth]{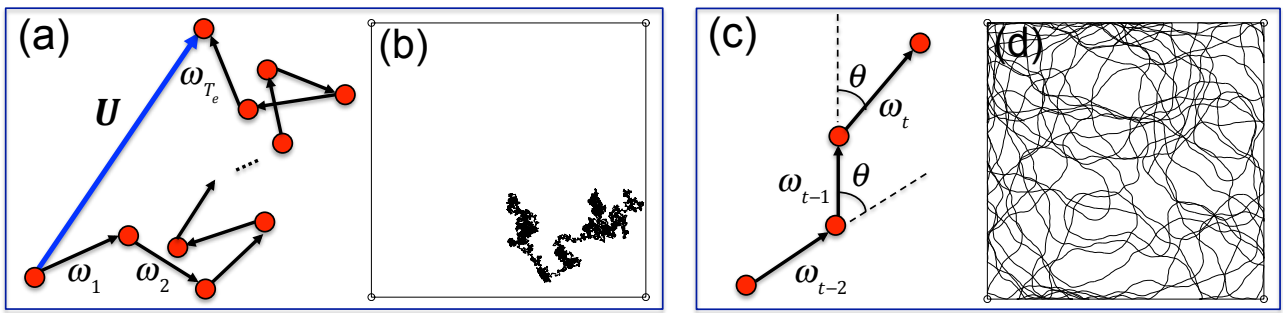}
\end{center}
\caption{ A chain (or trajectory) is shown as a sequence of $T_e$ random bond vectors $\{\boldsymbol{\omega}_i\}_{i=1..T_e}$. In a freely-jointed chain (a), the orientation of the bond vectors are independent of one another. The end-to-end vector of the chain is depicted by $\boldsymbol{U}$. In a freely-rotating chain (c), the correlation angle $\theta$ is invariant between every two consecutive bond vectors, which induces a finite stiffness in the chain. (b, d) A qualitative comparison between an FJC (b) and an FRC with $\theta\approx5.7^\circ$ (d), in a 2D environment of size $400\times400$ for $20000$ number of moves.}
\label{fig:Polymer-def-supp}
\end{figure*}

Two main ideal chain models are: 1) freely-jointed chains (FJCs) and 2) freely-rotating chains (FRCs) \cite{doi1988theory}. In these models, chains of size $T_e$ are demonstrated as a sequence of $T_e$ random vectors $\{ \boldsymbol{\omega}_i\}_{i=1..T_e}$, which are as well called \emph{bond vectors} (See Figure~\ref{fig:Polymer-def-supp}). FJC is the simplest proposed model and is composed of mutually independent random vectors of the same size (Figure~\ref{fig:Polymer-def-supp}(a)). In other words, an FJC chain is formed via uniform random sampling of vectors in space, and thus is a \emph{random walk} (RW). In the FRC model, on the other hand, the notion  of \emph{correlation angle} is introduced, which is the angle $\theta$ between every two consecutive bond vectors. The FRC model, fixes the correlation angle $\theta$ (Figure~\ref{fig:Polymer-def-supp}(c)), thus the vectors in the chain are temporally correlated. The vector sampling strategy in the FRCs induces \emph{persistent chains}, in the sense that the orientation of the consecutive vectors in the space are preserved for certain number of time steps (a.k.a. persistence number), after which the correlation is broken and the bond vectors \emph{forget} their original orientation. This feature introduces a finite \emph{stiffness} in the chain, which induces what we call {\em local} self avoidance, leading to faster expansion of the chain in the space (Compare Figures \ref{fig:Polymer-def-supp}~(b) and (d) together). Below, we discuss two important properties of the FJCs and the FRCs, and subsequently formally introduce the \emph{locally self-avoiding random walks} (LSA-RWs) in Definition 1.\\

\textbf{FJCs} (\emph{Property}) \textbf{-} In the Freely-Jointed Chains (FJCs) or the flexible chains model, the orientations of the bond vectors in the space are mutually independent. To measure the expected end-to-end length of a chain $\tilde{U}$ with $T_e$ bond vectors of constant length $b_o$ given the end-to-end vector ${\bf U} = \sum_{i=1}^{T_e} \boldsymbol{\omega}_i$ (Figure \ref{fig:Polymer-def-supp} (a)) and considering the mutual independence between bond vectors of an FJC, we can write \cite{doi1988theory},

\begin{align}
        \mathbb{E}[\norm{{\bf U}}^2] =\sum_{i,j=1}^{T_e}\mathbb{E}[\boldsymbol{\omega}_i.\boldsymbol{\omega}_j] = \sum_{i=1}^{T_e}\mathbb{E}[\boldsymbol{\omega}_i^2] + 2\sum_{i> j}\mathbb{E}[\boldsymbol{\omega}_i.\boldsymbol{\omega}_j] = T_e b_o^2,\label{eq:FJC}
\end{align}
where $\mathbb{E}[.]$ denotes the ensemble average over all possible conformations of the chain as a result of thermal fluctuations. Equation \ref{eq:FJC} shows that the expected end-to-end length $\tilde{U} = \mathbb{E}[\norm{{\bf U}}^2]^{1/2} = b_o\sqrt{T_e}$, which reveals random-walk behaviour as expected. 

\textbf{FRCs} (\emph{Property}) \textbf{-}
In the Freely-Rotating Chains (FRCs) model, we assume that the angle $\theta$ (correlation angle) between every two consecutive bond vectors is invariant (Figure \ref{fig:Polymer-def-supp} (c)). Therefore, bond vectors ${\boldsymbol{\omega}}_{i:1,...,T_e}$ are not mutually independent. Unlike the FJC model, in the FRC model the bond vectors are correlated such that \cite{doi1988theory},
\begin{align}
    \mathbb{E}[\boldsymbol{\omega}_i.\boldsymbol{\omega}_j]=b_o^2\left(\cos\theta\right)^{|i-j|}=b_o^2e^{-\frac{|i-j|}{Lp}},\label{eq:FRC-correlation}
\end{align}
where $Lp=\frac{1}{|\log(\cos\theta)|}$ is the correlation length (persistence number). Equation \ref{eq:FRC-correlation} shows that the correlation between bond vectors in an FRC is a decaying exponential with correlation length $Lp$. 

\begin{customlemma}{1}\label{thm:effective_length_lemma} \cite{doi1988theory}
Given an FRC characterized by end-to-end vector $\boldsymbol{U}$, bond-size $b_o$ and number of bond vectors $T_e$, we have $\mathbb{E}[\norm{\boldsymbol{U}}^2]=b^2T_e$, where $b^2 = b^2_o \frac{1+cos \theta}{1-cos \theta}$ and $b$ is called the \textit{effective bond length}.
\end{customlemma}
Lemma \ref{thm:effective_length_lemma} shows that FRCs obey random walk statistics with step-size (bond length) $b>b_o$. The ratio $b/b_o = \frac{1+\cos\theta}{1-\cos\theta}$ is a measure of the stiffness of the chain in an FRC. 

FRCs have high expansion rates compared to those of FJCs, as presented in Proposition \ref{prop:expanding} below.
\begin{customproposition}{2}[Expanding property of LSA-RW] \cite{doi1988theory}\label{prop:expanding}
Let $\tau$ be a LSA-RW with the persistence number $Lp_{\tau} > 1$ and the end-to-end vector $\boldsymbol{U}(\tau)$, and let $\tau^\prime$ be a random walk (RW) and the end-to-end vector $\boldsymbol{U}(\tau^\prime)$. Then for the same number of time steps and same average bond length for $\tau$ and $\tau^\prime$, the following relation holds,
\begin{align}
    \frac{\mathbb{E}[\norm{\boldsymbol{U}(\tau)}]}{\mathbb{E}[\norm{\boldsymbol{U}(\tau^\prime)}]} = \frac{1+e^{-1/Lp_\tau}}{1-e^{-1/Lp_\tau}} >1,\label{eq:expanding}
\end{align}
where the persistence number $Lp_\tau = \frac{1}{|\log\cos\theta|}$, with $\theta$ being the average correlation angle between every two consecutive bond vectors.
\end{customproposition}
\begin{proof}
This proposition is the direct result of combining Equations $2.7$ and $2.14$ in \cite{doi1988theory}. Equation $2.7$ provides the expected $T_e$ time-step length of the end-to-end vector with average step-size $b_o$ associated with FJCs and Equation $2.14$ provides a similar result for FRCs. Note that in the FRC model, since the bond vectors far separated in time on the chain are not correlated, they can cross each other.

\end{proof}
\textbf{Radius of Gyration} (\textit{Formal Definition}) \cite{rubenstein2003polymer} 
The square radius of gyration $U_g^2(\tau)$ of a chain $\tau$ of size $T_e$ is defined as the mean square distance between position vectors $\boldsymbol{t}\in\tau$ and the chain center of mass ($\bar{\boldsymbol{\tau}}$), and is written as,
\begin{align}
    U_g^2(\tau) := \frac{1}{T_e}\sum_{i=1}^{T_e} ||\boldsymbol{t}_i-\bar{\boldsymbol{\tau}}||^2,\label{eq:gyration}
\end{align}
where $\bar{\boldsymbol{\tau}} = \frac{1}{T_e} \sum_{i=1}^{T_e} \boldsymbol{t}_i$. When it comes to selecting a measure of coverage in the space where the chain resides, radius of gyration $U_g$ is a more proper choice compared with the end-to-end distance $||\boldsymbol{U}||$, as it signifies the size of the chain with respect to its center of mass, and is proportional to the radius of the sphere (or the hyper sphere) that the chain occupies. Moreover, in the case of chains that are circular or branched, and thus cannot be assigned an end-to-end length, radius of gyration proves to be a suitable measure for the size of the corresponding chains \cite{rubenstein2003polymer}. For the case of fluctuating chains, the square radius of gyration is usually ensemble averaged over all possible chain conformations, and is written as \cite{rubenstein2003polymer},
\begin{align}
    \mathbb{E}[U_g^2(\tau)] := \frac{1}{T_e}\sum_{i=1}^{T_e} \mathbb{E}[||\boldsymbol{t}_i-\bar{\boldsymbol{\tau}}||^2].\label{eq:gyration2}
\end{align}

\begin{remark} \label{rem:1}
The square radius of gyration $U_g^2$ is proportional to the square end-to-end distance $||\boldsymbol{U}||^2$ in ideal chains (\emph{e.g.} FJCs and FRCs) with a constant factor \cite{rubenstein2003polymer}. Thus, Proposition \ref{prop:expanding} and Equation \ref{eq:expanding}, which compare the the end-to-end distance of LSA-RW and RW with each other, similarly hold for the radius of gyration of the respective models, implying faster expansion of the volume occupied by LSA-RW compared with that of RW.
\end{remark}

\section{The Proofs}\label{sec:proofs}
In this section, the proofs for the theorems and Lemma \ref{lem:bond-transform} in the manuscript are provided.

\subsection{The proof of Lemma \ref{lem:bond-transform} in the manuscript}
\textbf{Lemma 2 statement}: Let $\tau_\states = (s_0, \dots, s_{T_e-1})$ be the trajectory of visited states, $s_{T_e}$ be a newly visited state and $\boldsymbol{\omega}_i = s_i-s_{i-1}$ be the bond vector that connects two consecutive visited states $s_{i-1}$ and $s_{i}$. Then we have,
 \begin{align}
      \norm{s_{T_e}- \bar{\tau}_\states}^2 = \norm{\boldsymbol{\omega}_{T_e} + \frac{1}{T_e} \left[ \sum_{i=1}^{T_e-1} i \boldsymbol{\omega}_i \right]}^2.\label{eq:lem-bond-transform1}
 \end{align}
\begin{proof}\label{proof:lem-bond-transform}

Using the relation $\Bar{\tau}_{\states}:=\frac{1}{T_e}\sum_{s \in \tau_\states} s$ as well as the definition of bond vectors (Equation \ref{eq:bond-vector} in the manuscript), we can write $s_{T_e}-\bar{\tau}_\states$ on the left-hand side of Equation \ref{eq:lem-bond-transform1} as,

\begin{align}
     s_{T_e}- \bar{\tau}_\states =& s_{T_e} - \frac{1}{T_e} \sum_{s \in \tau_\states} s\nonumber\\
     =& s_{T_e}-s_{T_e-1}+s_{T_e-1}-\frac{1}{T_e} \sum_{s \in \tau_\states} s\nonumber \\ 
    =&\boldsymbol{\omega}_{T_e} + \frac{1}{T_e} (s_{T_e-1}-s_0) + (s_{T_e-1}-s_1) + (s_{T_e-1}-s_2)+ \dots\nonumber\\
   &+(s_{T_e-1}-s_{T_e-2})]\nonumber\\ 
   =& \boldsymbol{\omega}_{T_e} + \frac{1}{T_e} [(s_{T_e-1}-s_{T_e-2}+s_{T_e-2}-s_{T_e-3}+\dots\nonumber\\
    &+s_2-s_1 + s_1-s_0)+(s_{T_e-1}-s_{T_e-2}+s_{T_e-2}-s_{T_e-3}+\dots\nonumber\\ &+s_3-s_2+s_2-s_1)+ \dots +(s_{T_e-1}-s_{T_e-2})]\nonumber\\
     =&\boldsymbol{\omega}_{T_e} + \frac{1}{T_e} \left[ (\boldsymbol{\omega}_{T_e-1}+\dots+\boldsymbol{\omega}_1)+(\boldsymbol{\omega}_{T_e-1} + \dots +\boldsymbol{\omega}_2)+\dots + \boldsymbol{\omega}_{T_e-1} \right]\nonumber\\
         =&\boldsymbol{\omega}_{T_e} + \frac{1}{T_e} \left[ \sum_{i=1}^{T_e-1} i \boldsymbol{\omega}_i \right]
\end{align}
\begin{align}
    &\Rightarrow \norm{s_{T_e}-\bar{\tau}_\states}^2 =\norm{\boldsymbol{\omega}_{T_e} + \frac{1}{T_e} \left[ \sum_{i=1}^{T_e-1} i \boldsymbol{\omega}_i \right]}^2\label{eq:proof-lem-bond-transform}
\end{align}
\end{proof}
\subsection{The proof of Theorem \ref{theory:upper} in the manuscript}

\textbf{Theorem 3 statement} (Upper-Bound Theorem)
Let $\beta \in (0,1)$ and $\tau_\states$ be an LSA-RW in $\states$ induced by PolyRL with the persistence number $Lp_{\tau_\states}>1$ within episode $N$, $\omega_{\tau_\states}=\{\boldsymbol{\omega}_i\}_{i=1}^{T_e-1}$ be the sequence of corresponding bond vectors, where $T_e>0$ denotes the number of bond vectors within $\tau_\states$, and $b_o$ be the average bond length. The upper confidence bound for $ULS_{Ug^2}(\tau_\states)$ with probability of at least $1-\delta$ is, 

\begin{align}
  UB = &\Lambda (T_e,\tau_\states) +\frac{1}{\delta}{\left[ \Gamma (T_e, b_o, \tau_\states) + \frac{2b^2_o}{T_e^2} \sum_{i=1}^{T_e-1} i  e^{\frac{-(T_e-i)}{Lp_{\tau_\states}}} \right]},
\end{align}
where,
\begin{align}
    \Lambda (T_e,\tau_\states) = -\frac{1}{T_e-1} U_g^2(\tau_\states) \\
    \Gamma (T_e, b_o, \tau_\states) = \frac{b^2_o}{T_e}  + \frac{\norm{\sum_{i=1}^{T_e-1} i\boldsymbol{\omega}_i}^2}{T_e^3}
\end{align}

\begin{proof}
If we replace the term ${U_g}^2(\tau^{\prime}_\states)$ in Equation \ref{eq:upper_bound_def} in the manuscript with its incremental representation as a function of ${U_g}^2(\tau_\states)$, we get
\begin{align}
    ULS_{Ug^2}(\tau_\states)&= \sup_{s_{T_e} \in \Omega}\left( \frac{T_e-2}{T_e-1} {U_g}^2(\tau_\states) + \frac{1}{T_e} \norm{s_{T_e}-\Bar{\tau}_\states}^2-{U_g}^2(\tau_\states)\right)\label{eq:incremental_variance}\nonumber\\
    &=  -\frac{1}{T_e-1} {U_g}^2(\tau_\states) + \sup_{s_{T_e} \in \Omega}  \frac{1}{T_e} \norm{s_{T_e}-\Bar{\tau}_\states}^2.
\end{align}
Therefore, the problem reduces to the calculation of 
\begin{align}\label{eq:upper-bound}
   \frac{1}{T_e} \sup_{s_{T_e} \in \Omega}   \norm{s_{T_e}-\Bar{\tau}_\states}^2.
\end{align}
Using Lemma \ref{lem:bond-transform} in the manuscript, we can write Equation \eqref{eq:upper-bound} in terms of bond vectors $\boldsymbol{\omega}_i = s_i-s_{i-1}$ as,
\begin{align}
    \frac{1}{T_e} \sup_{s_{T_e} \in \Omega}   \norm{s_{T_e}-\Bar{\tau}_\states}^2 = \frac{1}{T_e} \sup_{s_{T_e} \in \Omega} \norm{\boldsymbol{\omega}_{T_e} + \frac{1}{T_e} \left[ \sum_{i=1}^{T_e-1} i \boldsymbol{\omega}_i\right]}^2.
\end{align}
From now on, with a slight abuse of notation, we will treat $\boldsymbol{\omega}_{T_e}=S_{T_e}-s_{T_e-1}$ as a random variable due to the fact that $S_{T_e}$ is a random variable in our system. Note that $\boldsymbol{\omega}_i$ for $i=1,2,\dots,T_e-1$ is fixed, and thus is not considered a random variable. We use high-probability concentration bound techniques to calculate Equation \eqref{eq:upper-bound}. For any $\delta \in (0,1)$, there exists $\alpha>0$, such that
\begin{align}
    &\Pr[\norm{\boldsymbol{\omega}_{T_e} + \frac{1}{T_e} \left[ \sum_{i=1}^{T_e-1} i \boldsymbol{\omega}_i \right]}^2 < \alpha | S_{T_e} \in \Omega] > 1-\delta.\label{eq:upper_conc_bound}
\end{align}
We can rearrange Equation \ref{eq:upper_conc_bound} as,
\begin{align}
    &\Pr[\norm{\boldsymbol{\omega}_{T_e} + \frac{1}{T_e} \left[ \sum_{i=1}^{T_e-1} i \boldsymbol{\omega}_i \right]}^2 \geq \alpha | S_{T_e} \in \Omega] \leq \delta.\label{eq:upper_conc_bound1}
\end{align}
Multiplying both sides by $T_e^2$ and expanding the squared term in Equation \ref{eq:upper_conc_bound1} gives,
\begin{align}
    &\Pr[T_e^2 \norm{\boldsymbol{\omega}_{T_e}}^2 + 2 T_e (\boldsymbol{\omega}_{T_e}.\sum_{i=1}^{T_e-1} i\boldsymbol{\omega}_i)+\norm{\sum_{i=1}^{T_e-1} i\boldsymbol{\omega}_i}^2 \geq T_e^2 \alpha| S_{T_e} \in \Omega] \leq \delta.
\end{align}
By Markov’s inequality we have, 
\begin{align}
    \Pr&\left[T_e^2 \norm{\boldsymbol{\omega}_{T_e}}^2 + 2 T_e( \boldsymbol{\omega}_{T_e}.\sum_{i=1}^{T_e-1} i\boldsymbol{\omega}_i) + \norm{\sum_{i=1}^{T_e-1} i\boldsymbol{\omega}_i}^2 \geq T_e^2 \alpha \right]\nonumber\\
    &\leq \frac{\ex{T_e^2 \norm{\boldsymbol{\omega}_{T_e}}^2 + 2 T_e (\boldsymbol{\omega}_{T_e}.\sum_{i=1}^{T_e-1} i\boldsymbol{\omega}_i)+\norm{\sum_{i=1}^{T_e-1} i\boldsymbol{\omega}_i}^2}}{T_e^2 \alpha }= \delta\nonumber\\
    \implies & \alpha = \frac{1}{\delta T_e^2}{\left[T_e^2\ex{ \norm{\boldsymbol{\omega}_{T_e}}^2} + 2T_e \ex{\boldsymbol{\omega}_{T_e} . \sum_{i=1}^{T_e-1} i\boldsymbol{\omega}_i} +\norm{\sum_{i=1}^{T_e-1} i\boldsymbol{\omega}_i}^2   \right]} \nonumber\\
    \underbrace{\implies}_{\text{by Def. 1}} & \alpha = \frac{1}{\delta T_e^2}{\left[T_e^2 b^2_o + 2T_e \ex{\boldsymbol{\omega}_{T_e} . \sum_{i=1}^{T_e-1} i\boldsymbol{\omega}_i} +\norm{\sum_{i=1}^{T_e-1} i\boldsymbol{\omega}_i}^2   \right]}\nonumber
\end{align}

Note that all expectations $\mathbb{E}$ in the equations above are over the transition kernel $\mathcal{P}$ of the MDP. Using the results from Lemma \ref{lem:correlation} below, we conclude the proof.
\end{proof}

\begin{customlemma}{3}\label{lem:correlation} Let $\tau_\states$ denote the sequence of states observed by PolyRL and $S_{T_e}$ be the new state visited by PolyRL. Assuming that $\tau'_\states:=(\tau_\states,S_{T_e})$ (Equation \ref{eq:omega} in the manuscript) follows the LSA-RW formalism with the persistence number $Lp_{\tau_\states}>1$, we have
 \begin{align}
 \ex{\boldsymbol{\omega}_{T_e} . \sum_{i=1}^{T_e-1} i\boldsymbol{\omega}_i}  = b^2_0 \sum_{i=1}^{T_e-1} i  e^{\frac{-|T_e-i|}{Lp_{\tau_\states}}} 
\end{align}       
\end{customlemma}

\begin{proof}\label{proof:lem-correlation}
\begin{align}
    \ex{\boldsymbol{\omega}_{T_e} . \sum_{i=1}^{T_e-1} i\boldsymbol{\omega}_i} =\ex{ \sum_{i=1}^{T_e-1} i \boldsymbol{\omega}_{T_e} . \boldsymbol{\omega}_i} = \sum_{i=1}^{T_e-1} i \ex{ \boldsymbol{\omega}_{T_e} . \boldsymbol{\omega}_i}\label{eq:proof-lem-correlation}.
\end{align}
Here, the goal is to calculate the expectation in Equation \ref{eq:proof-lem-correlation} under the assumption that $\tau'_\states$ is LSA-RW with persistence number $Lp_{\tau_\states}>1$. Note that if  $\tau'_\states$ is LSA-RW and $Lp_{\tau_\states}>1$, the chain of states visited by PolyRL prior to visiting $s_{T_e}$ is also LSA-RW with $Lp_{\tau_\states}>1$. 
Now we focus on the expectation in Equation \ref{eq:proof-lem-correlation}. We compute $\ex{\boldsymbol{\omega}_{T_e}.\boldsymbol{\omega}_i}$ using the LSA-RW formalism (Definition \ref{def:LSAW} in the manuscript) as following,
\begin{align}
    \ex{\boldsymbol{\omega}_{T_e}.\boldsymbol{\omega}_i} = b^2_0 e^{\frac{-|T_e-i|}{Lp_{\tau_\states}}}\nonumber
\end{align}
Therefore, we have, 
\begin{align}
     \sum_{i=1}^{T_e-1} i \ex{ \boldsymbol{\omega}_{T_e} . \boldsymbol{\omega}_i}&= \sum_{i=1}^{T_e-1} i b^2_0 e^{\frac{-(T_e-i)}{Lp_{\tau_\states}}}= b^2_0 \sum_{i=1}^{T_e-1} i  e^{\frac{-(T_e-i)}{Lp_{\tau_\states}}}\nonumber
\end{align}
\end{proof}
\subsection{The proof of Theorem \ref{theory:lower} in the manuscript}
\textbf{Theorem 4 statement} (Lower-Bound Theorem)
Let $\beta \in (0,1)$ and $\tau_\states$ be an LSA-RW in $\states$ induced by PolyRL with the persistence number $Lp_{\tau_\states}>1$ within episode $N$, $\omega_{\tau_\states}=\{\boldsymbol{\omega}_i\}_{i=1}^{T_e-1}$ be the sequence of corresponding bond vectors, where $T_e>0$ denotes the number of bond vectors within $\tau_\states$, and $b_o$ be the average bond length. The lower confidence bound for $LLS_{Ug^2}(\tau_\states)$ at least with probability $1-\delta$ is, 
\small
\begin{align}
    LB &= \Lambda (T_e,\tau_\states)
    +(1-\sqrt{2-2\delta})\left[\Gamma (T_e, b_o, \tau_\states) +\frac{(T_e-1)(T_e-2)}{T_e^2} b^2_0 e^{\frac{-|T_e-1|}{Lp_{\tau_\states}}}   \right],
\end{align}
where,
\begin{align}
    \Lambda (T_e,\tau_\states) = -\frac{1}{T_e-1} U_g^2(\tau_\states) \\
    \Gamma (T_e, b_o, \tau_\states) = \frac{b^2_o}{T_e}  + \frac{\norm{\sum_{i=1}^{T_e-1} i\boldsymbol{\omega}_i}^2}{T_e^3}
\end{align}

\normalsize

\begin{proof}
Using the definition of radius of gyration and letting $d=L_2$-norm in Equation \ref{eq:radius_gyration} in the manuscript, we have 
\begin{align}
     LLS_{Ug^2}(\tau_\states) &= \inf_{s_{T_e} \in \Omega} \frac{T_e-2}{T_e-1} U_g^2(\tau_\states) + \frac{1}{T_e} \norm{s_{T_e}-\Bar{\tau}_\states}^2-U_g^2(\tau_\states)\nonumber\\
    &=  -\frac{1}{T_e-1} U_g^2(\tau_\states) + \inf_{s_{T_e} \in \Omega}  \frac{1}{T_e} \norm{s_{T_e}-\Bar{\tau}_\states}^2
\end{align}
To calculate the high-probability lower bound, first we use the result from Lemma \ref{lem:bond-transform} in the manuscript. Thus, we have
\begin{align}
    \inf_{s_{T_e} \in \Omega}  \frac{1}{T_e} \norm{s_{T_e}-\Bar{\tau}_\states}^2 = \frac{1}{T_e} \inf_{s_{T_e}\in \Omega}  \norm{\boldsymbol{\omega}_{T_e} + \frac{1}{T_e} \left[ \sum_{i=1}^{T_e-1} i \boldsymbol{\omega}_i \right]}^2.
\end{align}
We subsequently use the second moment method and Paley–Zygmund inequality to calculate the high-probability lower bound. Let $Y= \norm{\boldsymbol{\omega}_{T_e} + \frac{1}{T_e} \left[ \sum_{i=1}^{T_e-1} i \boldsymbol{\omega}_i \right]}^2$, for the finite positive constants $c_1$ and $c_2$ we have,
\begin{align}\label{eq:lower-bound-main}
    \Pr[Y > c_2 \beta] \geq \frac{(1-\beta)^2}{c_1} 
\end{align}
where, 
\begin{align}\label{eq:second-moment-constraints}
    \ex{Y^2} \leq c_1 \ex{Y}^2\\ \nonumber
    \ex{Y}\geq c_2.
\end{align}
The goal is to find two constants $c_1$ and $c_2$ such that Equation \eqref{eq:second-moment-constraints} is satisfied and then we find $\beta \in (0,1)$ in Equation \eqref{eq:lower-bound-main} using $\delta$. We start by finding $c_2$ , 
\begin{align}
    \ex{Y} &= \ex{(\boldsymbol{\omega}_{T_e} + \frac{1}{T_e} \left[ \sum_{i=1}^{T_e-1} i \boldsymbol{\omega}_i \right]s).(\boldsymbol{\omega}_{T_e} + \frac{1}{T_e} \left[ \sum_{i=1}^{T_e-1} i \boldsymbol{\omega}_i \right])}\nonumber\\
    &= \ex{ \norm{\boldsymbol{\omega}_{T_e}}^2 + \frac{2}{T_e} (\boldsymbol{\omega}_{T_e}.\sum_{i=1}^{T_e-1} i\boldsymbol{\omega}_i)+\frac{1}{T_e^2}\norm{\sum_{i=1}^{T_e-1} i\boldsymbol{\omega}_i}^2}\nonumber \\
    &= \ex{\norm{\boldsymbol{\omega}_{T_e}}^2}+\frac{2}{T_e} \sum_{i=1}^{T_e-1} i \ex{\boldsymbol{\omega}_{T_e}.\boldsymbol{\omega}_i} + \frac{1}{T_e^2}\norm{\sum_{i=1}^{T_e-1} i\boldsymbol{\omega}_i}^2\nonumber\\
    &= b^2_o +\frac{2}{T_e} b^2_0 \sum_{i=1}^{T_e-1} i  e^{\frac{-|T_e-i|}{Lp_{\tau_\states}}} + \frac{1}{T_e^2}\norm{\sum_{i=1}^{T_e-1} i\boldsymbol{\omega}_i}^2,\label{eq:E-y}
\end{align}
therefore, 
\begin{align}
    \ex{Y} &= b^2_o +\frac{2}{T_e} b^2_0 \sum_{i=1}^{T_e-1} i  e^{\frac{-|T_e-i|}{Lp_{\tau_\states}}} + \frac{1}{T_e^2}\norm{\sum_{i=1}^{T_e-1} i\boldsymbol{\omega}_i}^2\nonumber\\
    &\geq b^2_o +\frac{2}{T_e} b^2_0 e^{\frac{-|T_e-1|}{Lp_{\tau_\states}}} \sum_{i=1}^{T_e-1} i  + \frac{1}{T_e^2}\norm{\sum_{i=1}^{T_e-1} i\boldsymbol{\omega}_i}^2 \nonumber\\
    = & \underbrace{b^2_o +\frac{(T_e-1)(T_e-2)}{T_e} b^2_0 e^{\frac{-|T_e-1|}{Lp_{\tau_\states}}}  + \frac{1}{T_e^2}\norm{\sum_{i=1}^{T_e-1} i\boldsymbol{\omega}_i}^2 }_{=c_2}
\end{align}

To find $c_1$, we have
\begin{align}
    \ex{Y^2} \leq c_1 \ex{Y}^2.\nonumber
\end{align}

\begin{align}
    \ex{Y^2} &= \ex{ \left(\norm{\boldsymbol{\omega}_{T_e}}^2 + \frac{2}{T_e} (\boldsymbol{\omega}_{T_e}.\sum_{i=1}^{T_e-1} i\boldsymbol{\omega}_i)+\frac{1}{T_e^2}\norm{\sum_{i=1}^{T_e-1} i\boldsymbol{\omega}_i}^2 \right)^2}\nonumber\\
    &= \ex{\norm{\boldsymbol{\omega}_{T_e}}^4} +\frac{4}{T_e^2} \ex{ (\boldsymbol{\omega}_{T_e}.\sum_{i=1}^{T_e-1} i\boldsymbol{\omega}_i)^2} \nonumber\\
    &+\frac{1}{T_e^4} \ex{\norm{\sum_{i=1}^{T_e-1} i\boldsymbol{\omega}_i}^4} + \frac{4}{T_e}\ex{ \norm{\boldsymbol{\omega}_{T_e}}^2 (\boldsymbol{\omega}_{T_e}.\sum_{i=1}^{T_e-1} i\boldsymbol{\omega}_i)} \nonumber\\
    &+\frac{2}{T_e^2} \ex{ \norm{\boldsymbol{\omega}_{T_e}}^2 \norm{\sum_{i=1}^{T_e-1} i\boldsymbol{\omega}_i}^2} +\frac{2}{T_e^3} \ex{(\boldsymbol{\omega}_{T_e}.\sum_{i=1}^{T_e-1} i\boldsymbol{\omega}_i)\norm{\sum_{i=1}^{T_e-1} i\boldsymbol{\omega}_i}^2 }\nonumber\\
    &= \ex{\norm{\boldsymbol{\omega}_{T_e}}^4} +\frac{4}{T_e^2} \ex{ (\boldsymbol{\omega}_{T_e}.\sum_{i=1}^{T_e-1} i\boldsymbol{\omega}_i)^2}\nonumber \\
    &+\frac{1}{T_e^4} \norm{\sum_{i=1}^{T_e-1} i\boldsymbol{\omega}_i}^4 + \frac{4}{T_e}\ex{ \norm{\boldsymbol{\omega}_{T_e}}^2 (\boldsymbol{\omega}_{T_e}.\sum_{i=1}^{T_e-1} i\boldsymbol{\omega}_i)} \nonumber\\
    &+\frac{2b^2_o}{T_e^2} \norm{\sum_{i=1}^{T_e-1} i\boldsymbol{\omega}_i}^2  +\frac{2}{T_e^3} \norm{\sum_{i=1}^{T_e-1} i\boldsymbol{\omega}_i}^2 \ex{(\boldsymbol{\omega}_{T_e}.\sum_{i=1}^{T_e-1} i\boldsymbol{\omega}_i) }\label{eq:E-Y2}
\end{align}
We calculate the expectations appearing in Equation \eqref{eq:E-Y2} to conclude the proof.

\begin{align}
    \frac{4}{T_e^2} \ex{ (\boldsymbol{\omega}_{T_e}.\sum_{i=1}^{T_e-1} i\boldsymbol{\omega}_i)^2} &\leq \frac{4}{T_e^2}  \ex{ (\norm{\boldsymbol{\omega}_{T_e}}\norm{\sum_{i=1}^{T_e-1} i\boldsymbol{\omega}_i})^2}\nonumber\\
    &= \frac{4\norm{\sum_{i=1}^{T_e-1} i\boldsymbol{\omega}_i}^2}{T_e^2}  \ex{ (\norm{\boldsymbol{\omega}_{T_e}})^2} \nonumber\\
    &= \frac{4\norm{\sum_{i=1}^{T_e-1} i\boldsymbol{\omega}_i}^2 b^2_o}{T_e^2}\label{eq:expect-2}
\end{align}
\begin{align}
    \frac{4}{T_e}\ex{ \norm{\boldsymbol{\omega}_{T_e}}^2 (\boldsymbol{\omega}_{T_e}.\sum_{i=1}^{T_e-1} i\boldsymbol{\omega}_i)}
    &=\frac{4}{T_e} \ex{\sum_{i=1}^{T_e-1} i \norm{\boldsymbol{\omega}_{T_e}}^2 \boldsymbol{\omega}_{T_e}.\boldsymbol{\omega}_i }\nonumber\\
    &=\frac{4}{T_e} \sum_{i=1}^{T_e-1} i \ex{\norm{\boldsymbol{\omega}_{T_e}}^2 \boldsymbol{\omega}_{T_e}.\boldsymbol{\omega}_i }\nonumber\\
    &=\frac{4 \max_{s,s'} \norm{\boldsymbol{\omega}(s,s')}^2}{T_e} \sum_{i=1}^{T_e-1} i \ex{ \boldsymbol{\omega}_{T_e}.\boldsymbol{\omega}_i }\nonumber\\
    &=\frac{4b^2_o \max_{s,s'} \norm{\boldsymbol{\omega}(s,s')}^2}{T_e} \sum_{i=1}^{T_e-1} i e^{\frac{-(T_e-i)}{Lp_{\tau_\states}}}\label{eq:expect-3}
\end{align}
where $\boldsymbol{\omega}(s,s')$ denotes the bond vector between two states $s$ and $s'$.

To calculate $\ex{\norm{\boldsymbol{\omega}_{T_e}}^4}$, we let $Z \sim \mathcal{N}(0,1)$ and using definition 1, w.lo.g. we assume $\norm{\boldsymbol{\omega}_{T_e}}^2 \sim \mathcal{N}(b^2_o,\sigma^2)$ with $\sigma < \infty$. Thus, we have
\begin{align}
    &\ex{\norm{\boldsymbol{\omega}_{T_e}}^4} = \ex{\sigma^2 Z^2+2b^2_o\sigma Z + b^2_o} \nonumber\\
    &\underbrace{=}_{\text{Binomial Theorem and linearity of expectation}} \sigma^2 +b^4_o \label{eq:expect-1}
\end{align}

\begin{align}
     \ex{(\boldsymbol{\omega}_{T_e}.\sum_{i=1}^{T_e-1} i\boldsymbol{\omega}_i)} = \ex{(\sum_{i=1}^{T_e-1} i\boldsymbol{\omega}_{T_e}.\boldsymbol{\omega}_i)} = \sum_{i=1}^{T_e-1}i\ex{\boldsymbol{\omega}_{T_e}.\boldsymbol{\omega}_i}= \sum_{i=1}^{T_e-1}i b^2_o e^{\frac{-|T_e-i|}{Lp_{\tau_\states}}}\label{eq:expect-4}
\end{align}

Substitution of the expectations in Equation \eqref{eq:E-Y2} with Equations \eqref{eq:expect-1}, \eqref{eq:expect-2}, \eqref{eq:expect-3} and \eqref{eq:expect-4} gives,
\begin{align}
    \ex{Y^2} &\leq \sigma^2 +b^4_o  + \frac{4b^2_o}{T_e^2}\norm{\sum_{i=1}^{T_e-1} i\boldsymbol{\omega}_i}^2 + \frac{1}{T_e^4} \norm{\sum_{i=1}^{T_e-1} i\boldsymbol{\omega}_i}^4+ \frac{4b^2_o \max_{s,s'} \norm{\boldsymbol{\omega}(s,s')}^2}{T_e} \sum_{i=1}^{T_e-1} ie^{\frac{-(T_e-i)}{Lp_{\tau_\states}}}\nonumber\\ 
    &+ \frac{2b^2_o}{T_e^2} \norm{\sum_{i=1}^{T_e-1} i\boldsymbol{\omega}_i}^2 + \frac{2b^2_o}{T_e^3} \sum_{i=1}^{T_e-1}i  e^{\frac{-|T_e-i|}{Lp_{\tau_\states}}} \norm{\sum_{i=1}^{T_e-1} i\boldsymbol{\omega}_i}^2 
\end{align}
Equation \eqref{eq:E-y} gives,
\begin{align}\label{eq:denominator}
    \ex{Y}^2 &= b^4_o+ \frac{4b^4_o}{T_e^2}  \left(\sum_{i=1}^{T_e-1} i  e^{\frac{-(T_e-i)}{Lp_{\tau_\states}}}\right)^2 + \frac{1}{T_e^4}\norm{\sum_{i=1}^{T_e-1} i\boldsymbol{\omega}_i}^4+  \frac{2b^4_o}{T_e^2} \sum_{i=1}^{T_e-1} i  e^{\frac{-(T_e-i)}{Lp_{\tau_\states}}}\nonumber\\
    &+\frac{2b^2_o}{T_e^2}\norm{\sum_{i=1}^{T_e-1} i\boldsymbol{\omega}_i}^2  + \frac{2b^2_o}{T_e^3} \sum_{i=1}^{T_e-1} i  e^{\frac{-(T_e-i)|}{Lp_{\tau_\states}}} \norm{\sum_{i=1}^{T_e-1} i\boldsymbol{\omega}_i}^2
\end{align}

Now to find $c_1$, we use Equation \eqref{eq:second-moment-constraints},

\begin{align}
    \frac{4b^2_o}{T_e^2}\norm{\sum_{i=1}^{T_e-1} i\boldsymbol{\omega}_i}^2 - \frac{4b^4_o}{T_e^2}  \left(\sum_{i=1}^{T_e-1} i  e^{\frac{-|T_e-i|}{Lp_{\tau_\states}}}\right)^2 &= \frac{4b^2_o}{T_e^2}\left(\norm{\sum_{i=1}^{T_e-1} i\boldsymbol{\omega}_i}^2 - \left(\sum_{i=1}^{T_e-1} i  e^{\frac{-|T_e-i|}{Lp_{\tau_\states}}}\right)^2 \right)\nonumber\\
    &\leq \underbrace{\frac{4b^2_o}{T_e^2}\left(\norm{\sum_{i=1}^{T_e-1} i\boldsymbol{\omega}_i}^2\right)}_{B}.\nonumber
\end{align}
\begin{align}
    &\frac{4b^2_o \max_{s,s'} \norm{\boldsymbol{\omega}(s,s')}^2}{T_e} \sum_{i=1}^{T_e-1} i e^{\frac{-(T_e-i)}{Lp_{\tau_\states}}} - \frac{2b^4_o}{T_e^2} \sum_{i=1}^{T_e-1} i  e^{\frac{-(T_e-i)}{Lp_{\tau_\states}}} \nonumber\\
     &=\left( \frac{4b^2_o \max_{s,s'} \norm{\boldsymbol{\omega}(s,s')}^2}{T_e}  -  \frac{2b^4_o}{T_e^2} \right)\sum_{i=1}^{T_e-1} i e^{\frac{-(T_e-i)}{Lp_{\tau_\states}}} \nonumber\\
     &\leq \underbrace{\left( \frac{4b^2_o \max_{s,s'} \norm{\boldsymbol{\omega}(s,s')}^2}{T_e} \right) \sum_{i=1}^{T_e-1} i e^{\frac{-(T_e-i)}{Lp_{\tau_\states}}}}_{A}\nonumber
\end{align}

Thus, we have
\begin{align}
    \frac{\ex{Y^2}}{\ex{Y}^2} \leq 1 + \frac{\sigma^2 + A+ B}{\ex{Y}^2} \underbrace{\leq }_{\text{by comparing A and B with \eqref{eq:denominator}}} 2= c_1
\end{align}
\end{proof}

\subsection{The proof of Corollary \ref{cor:PolyRL-LSAW} in the manuscript} 

\textbf{Corollary 5 statement:} Given that assumption $1$ is satisfied, any exploratory trajectory induced by PolyRL algorithm (ref. Algorithm \ref{alg:plyrl} in the manuscript) with high probability is an LSA-RWs.

\begin{proof}
Given Assumption \ref{as:1} in the manuscript, due to the Lipschitzness of the transition probability kernel w.r.t. the action variable, the change in the distributions of the resulting states are finite and bounded by the $L_2$ distance of the actions. Thus, given a locally self-avoiding chain $\tau_{\actions} \in \actions^{T_e}$ with persistence number $Lp_{\tau_{\actions}}$, and $\forall i \in [T_e]:  b^2_o = \mathbb{E}[\norm{a_i}^2]$, by the Lipschitzness of the transition probability kernel of the underlying MDP, there exists a finite empirical average bond vector among the states visited by PolyRL (\emph{i.e.} the first condition in Definition \ref{def:LSAW} in the manuscript is satisfied). 
 
On the other hand, the PolyRL action sampling method (Algorithm 2) by construction preserves the expected correlation angle $\theta_{\tau_{\actions}}$ between the consecutive selected actions with finite $L_2$ norm, leading to a locally self-avoiding random walk in $\actions$. Given the following measure of spread adopted by PolyRL and defined as, 
\begin{align}
    U_g^2(\tau_\states) &:= \frac{1}{T_e-1} \sum_{s \in \tau_\states} \norm{s-\Bar{\tau}_\states}^2,
\end{align}
and the results of Theorems \ref{theory:upper} and \ref{theory:lower} in the manuscript ($LB$ and $UB$ high probability confidence bounds on the sensitivity of $U_g^2(.)$), and considering that at each time step the persistence number of the chain of visited states $Lp_{\tau_\states}$ is calculated and the exploratory action is selected such that the stiffness of $\tau_\states$ is preserved, with probability $1-\delta$ the correlation between the bonds in $\tau_\states$ is maintained (\emph{i.e.} the second condition in Definition \ref{def:LSAW} in the manuscript is satisfied). Hence, with probability $1-\delta$ the chain $\tau_{\states}$ induced by PolyRL is locally self avoiding.
\end{proof}

\begin{corollary}
Under assumption \ref{as:1} in the manuscript, with high probability the $T_e$ time-step exploratory chain $\tau$ induced by PolyRL with persistence number $Lp_\tau$ provides higher space coverage compared with the $T_e$ time-step exploratory chain $\tau'$ generated by a random-walk model.
\end{corollary}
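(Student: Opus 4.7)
The plan is to chain together three results already in the paper: Corollary \ref{cor:PolyRL-LSAW}, which guarantees (with probability at least $1-\delta$) that the PolyRL trajectory $\tau$ is an LSA-RW with persistence number $Lp_\tau>1$; Proposition \ref{prop:expanding}, which compares the expected end-to-end length of an LSA-RW against that of an ordinary random walk with the same average bond length; and Remark \ref{rem:1}, which says that for ideal chains the squared radius of gyration is proportional to the squared end-to-end distance with a constant of proportionality that depends only on $T_e$ (not on the persistence). Taking the squared radius of gyration $U_g^2$ as the operational notion of ``space coverage'' (as used throughout the algorithm and in Section \ref{sec:theory}), the statement then reduces to comparing $\mathbb{E}[U_g^2(\tau)]$ to $\mathbb{E}[U_g^2(\tau')]$ on the event that $\tau$ is LSA-RW.

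First I would invoke Corollary \ref{cor:PolyRL-LSAW}: under Assumption \ref{as:1}, with probability at least $1-\delta$ the chain $\tau$ produced by Algorithm \ref{alg:plyrl} satisfies both conditions of Definition \ref{def:LSAW}, so $\tau$ is an LSA-RW with parameters $(b_o, Lp_\tau)$ and $Lp_\tau>1$. Condition on this high-probability event. Next I would apply Proposition \ref{prop:expanding} to the conditioned chain $\tau$ and to the random walk $\tau'$ (which is the $Lp\to 0$ limit of the LSA-RW model, equivalently an FJC with the same $b_o$ and the same number of steps $T_e$) to get
\begin{equation*}
\frac{\mathbb{E}[\lVert \boldsymbol{U}(\tau)\rVert]}{\mathbb{E}[\lVert \boldsymbol{U}(\tau')\rVert]} \;=\; \frac{1+e^{-1/Lp_\tau}}{1-e^{-1/Lp_\tau}} \;>\; 1,
\end{equation*}
since $Lp_\tau>1$ makes the right-hand side strictly greater than one. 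Finally, I would invoke Remark \ref{rem:1} to transfer this strict inequality from the end-to-end distance to the radius of gyration: because both $\tau$ and $\tau'$ are ideal chains of the same length $T_e$, there exists a constant $\kappa(T_e)>0$ such that $\mathbb{E}[U_g^2(\cdot)] = \kappa(T_e)\,\mathbb{E}[\lVert \boldsymbol{U}(\cdot)\rVert^2]$, so the ratio of expected squared radii of gyration equals the ratio of expected squared end-to-end distances, which is strictly greater than one. Combining with the conditioning event gives, with probability at least $1-\delta$, strictly greater expected $U_g^2$ for the PolyRL chain than for the random walk chain of the same length and average bond size.

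The main obstacle is not algebraic but conceptual: the corollary mixes a \emph{high-probability} statement (that PolyRL induces an LSA-RW) with an \emph{in-expectation} statement (that LSA-RWs expand faster than RWs). I would therefore be explicit that ``higher space coverage'' is interpreted as the expected squared radius of gyration, and that the probability $1-\delta$ refers solely to the event that $\tau$ qualifies as LSA-RW in the sense of Definition \ref{def:LSAW}; once we condition on that event, the expansion ratio is deterministic and bounded below by $(1+e^{-1/Lp_\tau})/(1-e^{-1/Lp_\tau})$. A secondary subtlety is that Proposition \ref{prop:expanding} as stated compares $\mathbb{E}[\lVert \boldsymbol{U}\rVert]$ rather than $\mathbb{E}[\lVert \boldsymbol{U}\rVert^2]$; since Lemma \ref{thm:effective_length_lemma} gives the second-moment form directly, I would in practice replace the citation to equations $2.7$ and $2.14$ of \cite{doi1988theory} by Lemma \ref{thm:effective_length_lemma} applied with effective bond length $b=b_o\sqrt{(1+\cos\theta)/(1-\cos\theta)}$ for the LSA-RW and $b=b_o$ for the RW, which yields the analogous ratio for $\mathbb{E}[\lVert \boldsymbol{U}\rVert^2]$ and hence, via Remark \ref{rem:1}, for $\mathbb{E}[U_g^2]$.
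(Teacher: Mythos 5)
Your proposal is correct and follows essentially the same route as the paper, whose proof is simply ``Corollary \ref{cor:PolyRL-LSAW} together with Remark \ref{rem:1}'' (with Remark \ref{rem:1} itself resting on Proposition \ref{prop:expanding}). Your elaboration --- making explicit that coverage is measured by $\mathbb{E}[U_g^2]$, conditioning on the high-probability LSA-RW event, and noting that the second-moment comparison is better obtained from Lemma \ref{thm:effective_length_lemma} than from the first-moment form of Proposition \ref{prop:expanding} --- is a more careful writeup of the same argument.
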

\begin{proof}
Results from Corollary \ref{cor:PolyRL-LSAW} in the manuscript together with Remark \ref{rem:1}  conclude the proof. 
\end{proof}

\section{Action Sampling Method}\label{sec:action-sampling}
In this section, we provide the action sampling algorithm (Algorithm \ref{alg:actionSampling}), which contains the step-by-step instruction for sampling the next action. The action sampling process is also graphically presented in Figure \ref{fig:state-action-traj} (a).
\begin{figure*}[ht]
\begin{center}
\includegraphics[width=1 \textwidth]{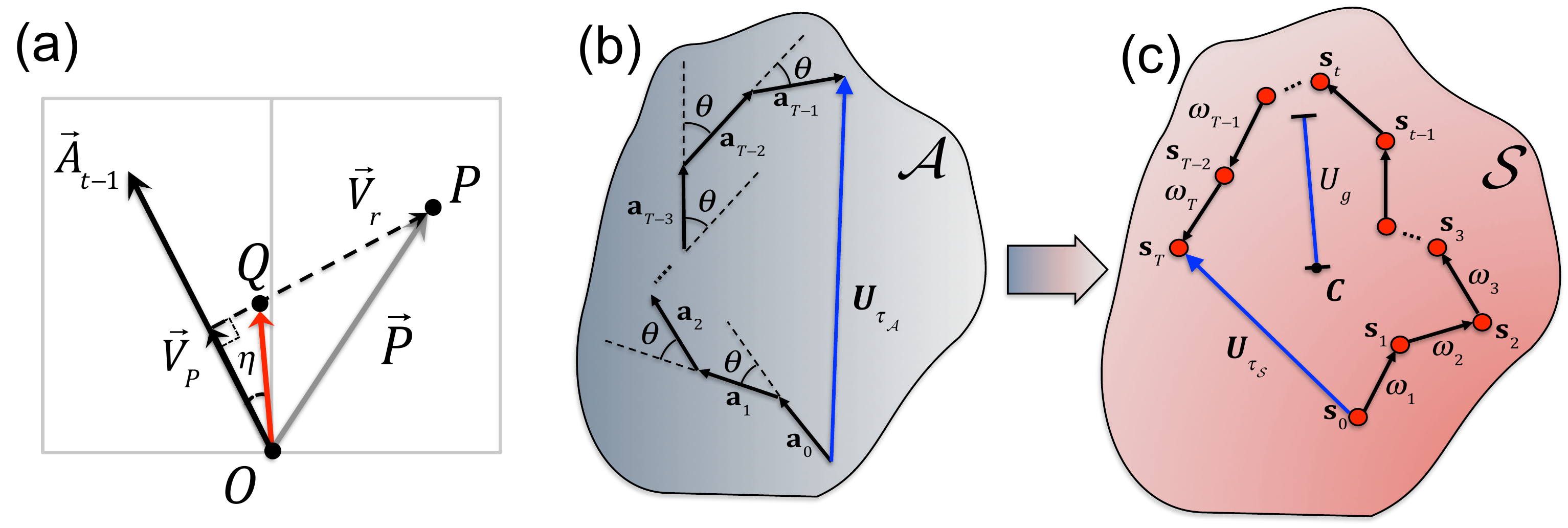}
\end{center}
\caption{Schematics of the steps involved in the PolyRL exploration technique. (a) The action sampling method. In order to choose the next action $\Vec{\bf A}_t$, a randomly chosen point $P$ in $\actions$ is projected onto the current action vector $\Vec{\bf A}_{t-1}$, which gives $\Vec{\bf V}_P$. The point $Q$ is subsequently found on the vector $\Vec{\bf V}_r = \vec{\bf P}-\vec{\bf V}_P$ using trigonometric relations and the angle $\eta$ drawn from a normal distribution with mean $\theta$. The resulting vector $\vec{OQ}$ (shown in red) gives the next action. Detailed instructions are given in Algorithm 2. (b) A schematic of action trajectory $\tau_\actions$ with the mean correlation angle $\theta$ between every two consecutive bond vectors and the end-to-end vector ${\bf U}_{\tau_\actions}$. (c) A schematic of state trajectory $\tau_\states$ with bond vectors $ \boldsymbol{\omega}_i={\bf s}_i-{\bf s}_{i-1}$. The radius of gyration and the end-to-end vector are depicted as $U_g$ and ${\bf U}_{\tau_\states}$, respectively. Point ${\bf C}$ is the center of mass of the visited states.
}\label{fig:state-action-traj}
\end{figure*}
\setcounter{algorithm}{1}
\begin{algorithm}[H]
\caption{Action Sampling}\label{alg:actionSampling}
\begin{algorithmic}[1]
\REQUIRE Angle $\eta$ and Previous action ${\bf A}_{t-1}$
\STATE Draw a random point $P$ in the action space ($P_i\sim\mathcal{U}[-m,m]; i=1,\dots d$)   \COMMENT{${\bf P}$ is the vector from the origin to the point $P$}
\STATE $D={\bf A}_{t-1}.{\bf P}$
\STATE ${\bf V}p=\dfrac{D}{||{\bf A}_{t-1}||_2^2}{\bf A}_{t-1}$ \COMMENT{The projection of ${\bf P}$ on ${\bf A}_{t-1}$}
\STATE ${\bf V}r={\bf P}-{\bf V}p$
\STATE $l=||{\bf V}p||_2\tan\eta$
\STATE $k=l/||{\bf V}r||_2$
\STATE ${\bf Q}=k{\bf V}r+{\bf V}p$
\IF {$D>0$}
	\STATE ${\bf A}_t={\bf Q}$
\ELSE
	\STATE ${\bf A}_t=-{\bf Q}$
\ENDIF
\STATE Clip ${\bf A}_t$ if out of action range
\STATE Return ${\bf A}_t$
\end{algorithmic}
\end{algorithm}

\section{Additional Baseline Results} \label{sec:aditional-baseline}
In this section, we provide the benchmarking results for DDPG-UC, DDPG-OU, DDPG-PARAM, DDPG-FiGAR (Figure \ref{fig:Dense}), as well as SAC and OAC (Figure \ref{fig:Dense_SAC}) algorithms on three standard MuJoco tasks. Moreover, the source code is provided \href{https://github.com/h-aboutalebi/SparseBaseline.git}{here}.
\begin{figure}[ht]
\begin{center}
\includegraphics[width=1 \textwidth]{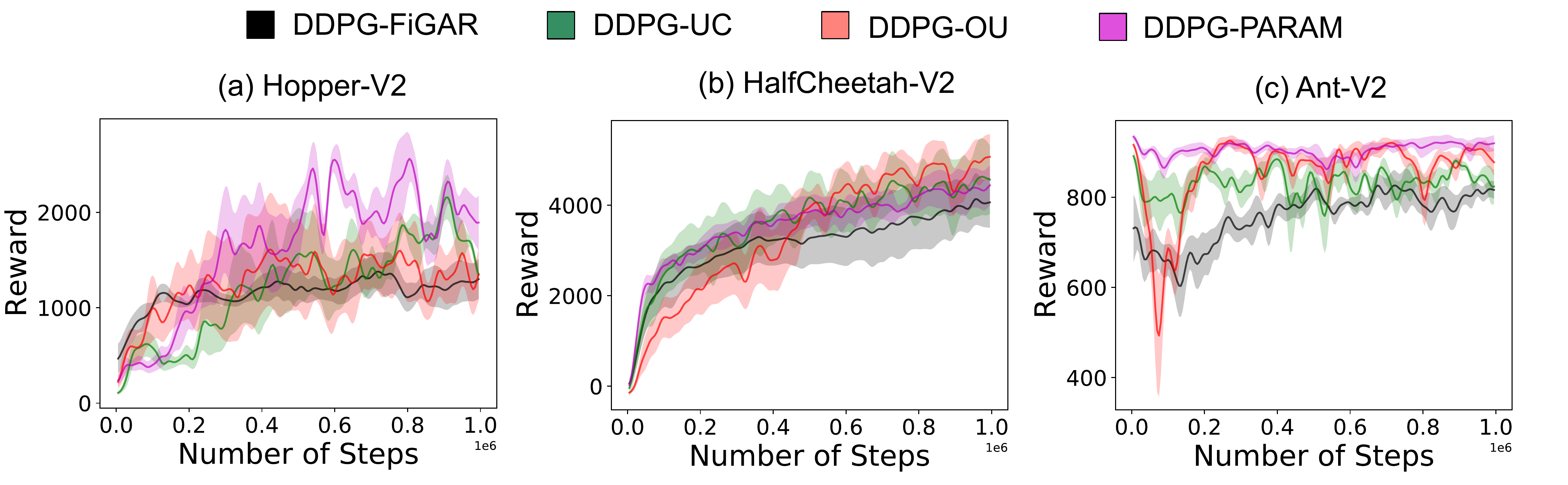}
\end{center}
\caption{Performance of DDPG-UC, DDPG-OU, DDPG-PARAM, and DDPG-FiGAR algorithms across 3 MuJoCo domains. The plots are averaged over 5 random seeds. The test evaluation happens every 5k over 1 million time steps.}
\label{fig:Dense}
\end{figure}

\begin{figure}[H]
\begin{center}
\includegraphics[width=1 \textwidth]{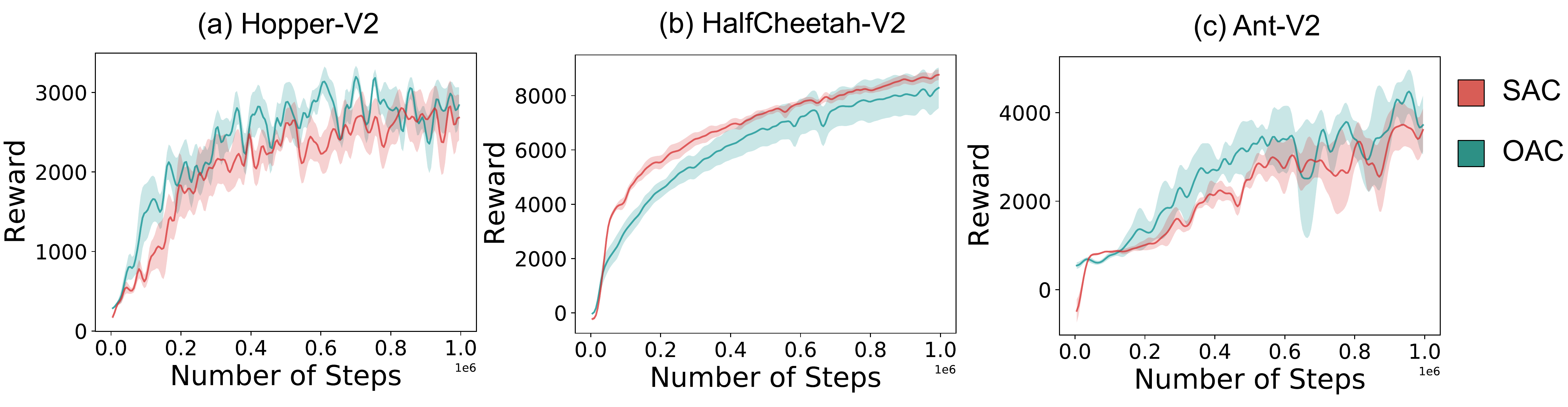}
\end{center}
\caption{Performance of SAC and OAC algorithms across 3 MuJoCo domains. The plots are averaged over 5 random seeds. The test evaluation happens every 5k steps over 1 million time steps.}
\label{fig:Dense_SAC}
\end{figure}
In order to Benchmark DDPG-FiGAR results, we let the action repetition set, defined as $W := \{1,2,\dots |W|\}$ (\cite{sharma2017learning}), be equal to $\{1\}$. The results are expected to converge to those of DDPG-OU noise as depicted in Figure \ref{fig:FIGAR-Bench}.
\begin{figure}[H]
\begin{center}
\includegraphics[width=1 \textwidth]{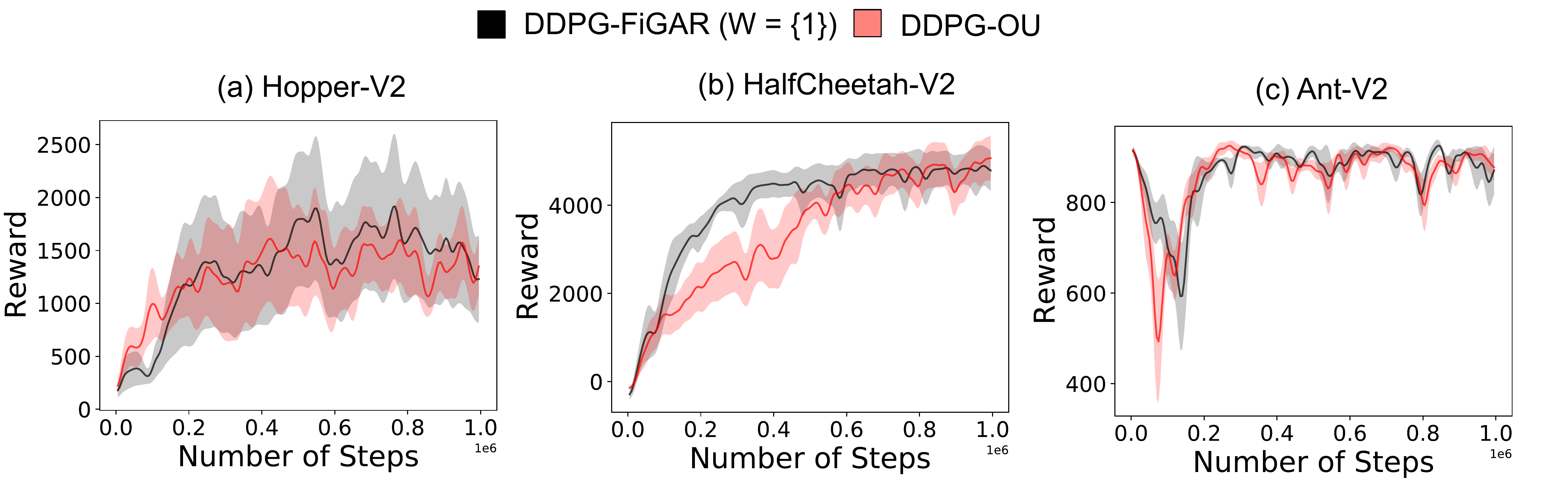}
\end{center}
\caption{Benchmarking DDPG-FiGAR against DDPG-OU using action repetition set $W = \{1\}$ across 3 MuJoCo domains. The plots are averaged over 5 random seeds. The test evaluation happens every 5k steps over 1 million time steps.}
\label{fig:FIGAR-Bench}
\end{figure}

\section{Hyperparameters and Network Architecture}\label{sec:hyperparams}
In this section, we provide the architecture of the neural networks (Table \ref{tab:Network}), as well as the PolyRL hyper parameters (Table \ref{tab:PolyRL}) used in the experiments. Regarding the computing infrastructure, the experiments were run on a slurm-managed cluster with NVIDIA P100 Pascal (12G HBM2 memory) GPUs. The avergae run-time for DDPG-based and SAC-based models were around $8$ and $12$ hours, respectively. 

\begin{table}[H]
\caption{DDPG and SAC Network Architecture}
\centering
    \begin{tabular}{|p{0.5\textwidth}|p{0.5\textwidth}|}
    \hline
     \textbf{Parameter}  &  \textbf{Value}\\ \hline\hline
       \textsl{Optimizer}  &  Adam\\ \hline
       \textsl{Critic Learning Rate} & \begin{tabular}{@{}p{0.25\textwidth}||p{0.25\textwidth}@{}}
                   $1\mathrm{e}{-3}$ (DDPG) & $3\mathrm{e}{-4}$ (SAC) \\
                 \end{tabular}  \\ \hline
    \textsl{Actor Learning Rate} & \begin{tabular}{@{}p{0.25\textwidth}||p{0.25\textwidth}@{}}
                   $1\mathrm{e}{-4}$ (DDPG) & $3\mathrm{e}{-4}$ (SAC) \\
                 \end{tabular}  \\ \hline
    \textsl{Discount Factor} & 0.99 \\ \hline
    \textsl{Replay Buffer Size} & $1\mathrm{e}{+6}$\\ \hline
    \textsl{Number of Hidden Layers (All Networks)} & $2$\\ \hline
    \textsl{Number of Units per Layer} & \begin{tabular}{@{}p{0.25\textwidth}||p{0.25\textwidth}@{}}
                   $400$ ($1^{\mbox{\scriptsize st}}$)- $300$ ($2^{\mbox{\scriptsize nd}}$) (DDPG) & both $256$ (SAC) \\
                 \end{tabular}  \\ \hline
    \textsl{Number of Samples per Mini Batch} & $100$\\ \hline
    \textsl{Nonlinearity} & ReLU\\ \hline
    \textsl{Target Network Update Coefficient} & $5\mathrm{e}{-3}$\\ \hline
    \textsl{Target Update Interval} & $1$\\ \hline
    \end{tabular}
    \label{tab:Network}
\end{table}

\textbf{The exploration factor -} The one important parameter in the PolyRL exploration method, which controls the exploration-exploitation trade-off is the exploration factor $\beta\in [0,1]$. The factor $\beta$ plays the balancing role in two ways: controlling (1) the range of confidence interval (Equations \ref{eq:upper} and \ref{eq:lower} in the manuscript; $\delta = 1-e^{-\beta N}$); and (2) the probability of switching from the target policy $\pi_\mu$ to the behaviour policy $\pi_{\mbox{\scriptsize PolyRL}}$. Figure \ref{fig:heat_map} illustrates the effect of varying $\beta$ on the performance of a DDPG-PolyRL agent in the HalfCheetah-v2 environment. The heat maps (Figures \ref{fig:heat_map} (a), (b) and (c)) show the average asymptotic reward obtained for different pairs of correlation angle $\theta$ and variance $\sigma^2$. The heat maps depict that for this specific task, the performance of DDPG-PolyRL improves as $\beta$ changes from $0.0004$ to $0.01$. The performance plot for the same task (Figure \ref{fig:heat_map} (d)) shows the effect of $\beta$ on the amount of the obtained reward. The relation of $\beta$ with the percentage of the moves taken using the target policy is illustrated in Figure \ref{fig:heat_map} (e)). As expected, larger values of $\beta$ lead to more exploitation and fewer exploratory steps.
\begin{figure*}[ht]
\begin{center}
\includegraphics[width=1\textwidth]{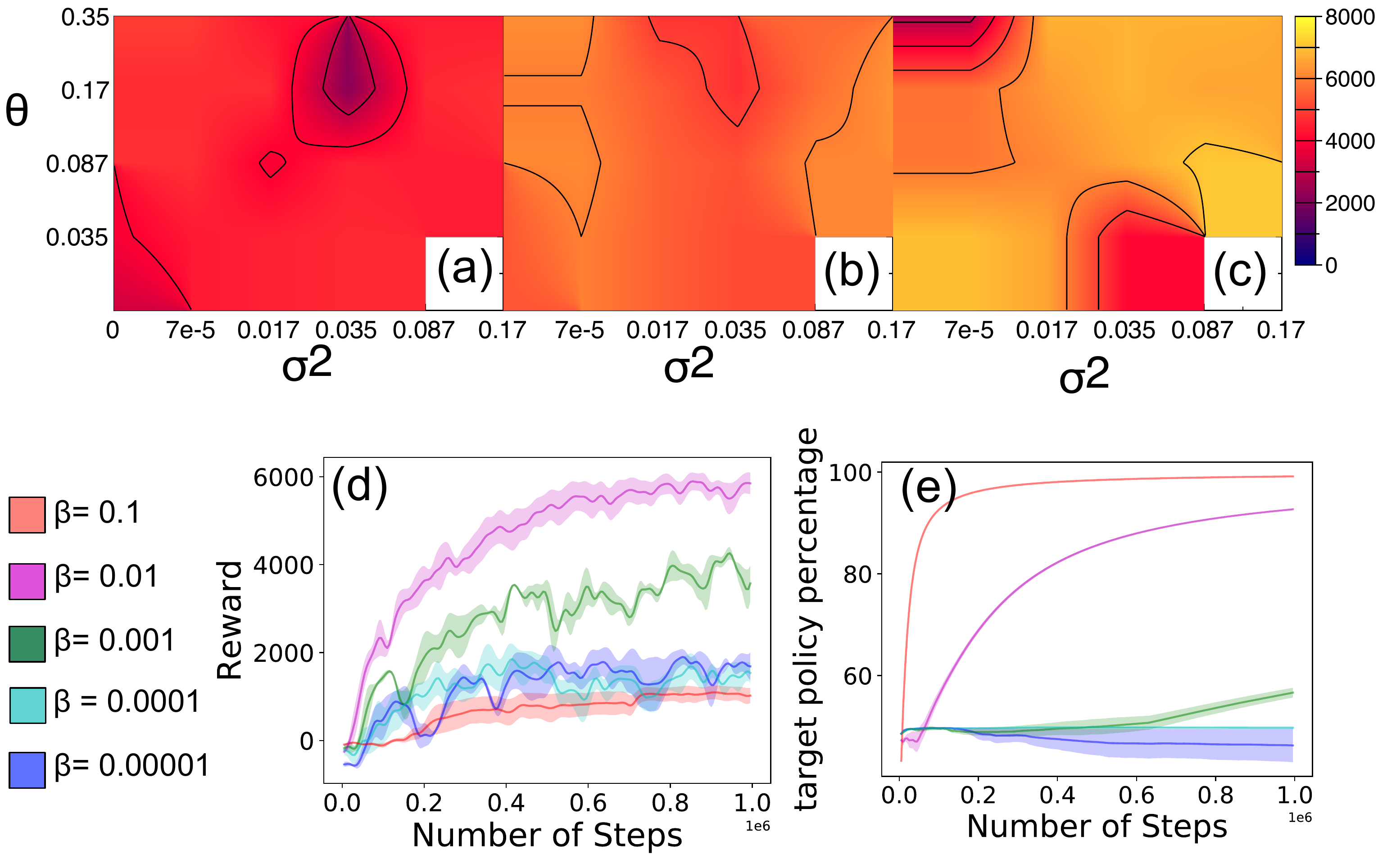}
\end{center}
\caption{Performance of DDPG-PolyRL in HalfCheetah-v2 for different values of exploration factor $\beta$. (a-c) Heat maps depict the mean of the obtained asymptotic rewards after 3 million time steps over a range of correlation angle $\theta$ and the variance $\sigma^2$. The results are shown for $\beta = 0.0004$ (a), $\beta = 0.001$ (b), and $\beta = 0.01$ (c). (d) Performance of DDPG-PolyRL in HalfCheetah-v2 for the fixed values of $\theta = 0.035$ and $\sigma^2 = 0.00007$, and different values of $\beta$. (e) The percentage of the movements the DDPG-PolyRL agent behaves greedily. All values are averaged over four random seeds and the error bars show the standard error on the mean.}
\label{fig:heat_map}
\end{figure*}

\begin{table}[H]
    \centering   
    \caption{PolyRL Hyper parameters. Note that the parameters $\theta$ and $\sigma$ are angles and their respective values in the table are in radian.}
    \begin{tabular}{|c|c|c|c|}
    \hline
    \multicolumn{1}{|c|}{\cellcolor[HTML]{D0D0D0}} & \begin{tabular}{@{}c@{}}
    \textbf{Mean Correlation Angle}\\
    $\boldsymbol{\theta}$\\
    \end{tabular} & \begin{tabular}{@{}c@{}}
    \textbf{Variance} \\
    $\boldsymbol{\sigma^2}$\\
    \end{tabular} & \begin{tabular}{@{}c@{}}
    \textbf{Exploration Factor} \\
    $\boldsymbol{\beta}$\\
    \end{tabular}\\
    \hline\hline
    \multicolumn{4}{|c|}{\emph{DDPG-PolyRL}}\\ \hline
    \textsl{SparseHopper-V2 ($\lambda = 0.1$)} & 0.035 & 0.00007 & 0.001\\ \hline
    \textsl{SparseHalfCheetah-V2 ($\lambda = 5$)} & 0.17 & 0.017 & 0.02 \\ \hline
    \textsl{SparseAnt-V2 ($\lambda = 0.15$)} & 0.087 & 0.035 & 0.01\\ \hline\hline
    \multicolumn{4}{|c|}{\emph{SAC-PolyRL}}\\ \hline
    \textsl{SparseHopper-V2 ($\lambda = 3$)} & 0.35 & 0.017 & 0.01 \\ \hline
    \textsl{SparseHalfCheetah-V2 ($\lambda = 15$)} & 0.35 & 0.00007 & 0.05 \\ \hline
    \textsl{SparseAnt-V2 ($\lambda = 3$)} & 0.035 & 0.00007 & 0.01 \\ \hline
    \end{tabular}
    \label{tab:PolyRL}
\end{table}

\clearpage
\end{document}